\documentclass[sigconf]{acmart}
\usepackage{multirow}
\usepackage{array}
\usepackage{enumitem}
\usepackage{multirow}
\usepackage{graphicx}
\usepackage{array}
\usepackage{subcaption} 
\usepackage[normalem]{ulem}
\useunder{\uline}{\ul}{}
\AtBeginDocument{%
  }

\useunder{\uline}{\ul}{}

\newtheorem{theorem}{\textbf{Theorem}}[section]
\newtheorem{lemma}{\textbf{Lemma}}[section]
\newtheorem{proposition}{\textbf{Proposition}}[section]

\newtheorem{definition}{\textbf{Definition}}[section]

\copyrightyear{2026}
\acmYear{2026}
\setcopyright{cc}
\setcctype{by}
\acmConference[WWW '26] {Proceedings of the ACM Web Conference 2026}{April 13--17, 2026}{Dubai, United Arab Emirates.}
\acmBooktitle{Proceedings of the ACM Web Conference 2026 (WWW '26), April 13--17, 2026, Dubai, United Arab Emirates}
\acmISBN{979-8-4007-2307-0/2026/04}
\acmDOI{10.1145/3774904.3792077}

\begin{document}
\title{Toward Graph-Tokenizing Large Language Models with Reconstructive Graph Instruction Tuning}

\author{Zhongjian Zhang}
\email{zhangzj@bupt.edu.cn}
\authornote{Both authors contributed equally to this research.}
\affiliation{%
  \institution{Beijing University of Posts and Telecommunications}
  \city{Beijing}
  \country{China}
}
\author{Xiao Wang}
\email{xiao_wang@buaa.edu.cn}
\authornotemark[1]
\affiliation{%
  \institution{Beihang University}
  \city{Beijing}
  \country{China}}
  
\author{Mengmei Zhang}
\email{zhangmengmei@bestpay.com.cn}
\affiliation{
  \institution{China Telecom Bestpay}
  \city{Beijing}
  \country{China}}
  
\author{Jiarui Tan}
\email{tanjiarui@bupt.edu.cn}
\affiliation{
  \institution{Beijing University of Posts and Telecommunications}
  \city{Beijing}
  \country{China}}
  
\author{Chuan Shi}
\email{shichuan@bupt.edu.cn}
\authornote{Corresponding author.}
\affiliation{
  \institution{Beijing University of Posts and Telecommunications}
  \city{Beijing}
  \country{China}}

\begin{abstract}
The remarkable success of large language models (LLMs) has motivated researchers to adapt them as universal predictors for various graph-related tasks, with the ultimate goal of developing a graph foundation model that generalizes diverse scenarios. The key challenge is to align graph data with language spaces so that LLMs can better comprehend graphs. As a popular paradigm, Graph-Tokenizing LLMs (GTokenLLMs) encode complex structures and lengthy texts into a graph token sequence, and then align them with text tokens via language instructions tuning. Despite their initial success, our information-theoretic analysis reveals that existing GTokenLLMs rely solely on text supervision from language instructions, which achieve only implicit graph–text alignment, resulting in a text-dominant bias that underutilizes graph context. To overcome this limitation, we first prove that the alignment objective is upper-bounded by the mutual information between the input graphs and their hidden representations in the LLM, which motivates us to improve this upper bound to achieve better alignment. To this end, we further propose a reconstructive graph instruction tuning pipeline, \textbf{RGLM}\footnote{The source code is available at \href{https://github.com/zhongjian-zhang/RGLM}{https://github.com/zhongjian-zhang/RGLM}}. Our key idea is to reconstruct the graph information from the LLM’s graph token outputs, explicitly incorporating graph supervision to constrain the alignment process. Technically, we embody RGLM by exploring three distinct variants from two complementary perspectives: RGLM-Decoder from the input space; RGLM-Similarizer and RGLM-Denoiser from the latent space. Additionally, we theoretically analyze the alignment effectiveness of each variant. Extensive experiments on various benchmarks and task scenarios validate the effectiveness of the proposed RGLM, paving the way for new directions in GTokenLLMs' alignment research.

\end{abstract}

\begin{CCSXML}
<ccs2012>
<concept>
<concept_id>10002950.10003624.10003633.10010917</concept_id>
<concept_desc>Mathematics of computing~Graph algorithms</concept_desc>
<concept_significance>300</concept_significance>
</concept>
<concept>
<concept_id>10010147.10010178.10010179.10010182</concept_id>
<concept_desc>Computing methodologies~Natural language generation</concept_desc>
<concept_significance>300</concept_significance>
</concept>
</ccs2012>
\end{CCSXML}

\ccsdesc[300]{Mathematics of computing~Graph algorithms}
\ccsdesc[300]{Computing methodologies~Natural language generation}

\keywords{Graph Foundation Models; Large Language Models; Graph Mining}

\maketitle
\begin{figure}[tbp]
    \centering
    \includegraphics[width=0.85\linewidth]{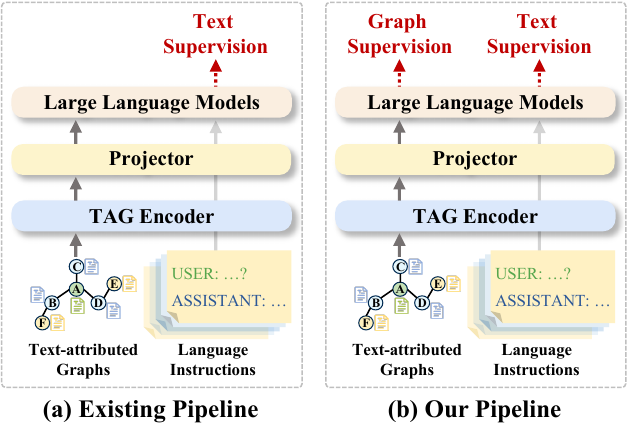}
    \vspace{-15pt}
    \caption{(a) Typical graph instruction tuning methods follow an LLM-centric design that solely leverages text supervision. (b) RGLM designs extra reconstructive graph tasks to explicitly incorporate graph supervision into the tuning process.}
    \label{fig:gtokenllm_pipeline}
    \vspace{-15pt}
\end{figure}
\section{Introduction}
Graph data are ubiquitous in the real world, such as social networks~\cite{huang2024can} and webpage networks~\cite{wu2022semi}.
Traditional graph neural networks (GNNs)~\cite{DBLP:conf/iclr/KipfW17,wu2022nodeformer} are powerful tools for modeling graph structures, but they suffer from a weak multi-task handling capability~\cite{chen2024llaga}.
Recently, the success of GPT-style LLMs~\cite{touvron2023llama,achiam2023gpt,chiang2023vicuna} has motivated numerous studies to adapt them to the graph learning domain, particularly for text-attributed graphs (TAGs), where each node is associated with a text attribute. 
The ultimate goal is to build a graph foundation model that generalizes across diverse scenarios~\cite{liu2025graph}, in which LLMs are typically employed as predictors to accomplish various graph tasks, e.g., node classification, exhibiting promising potential and attracting considerable attention~\cite{chen2024exploring,zhou-etal-2025-graph,zhang2025can,wanggeneralization}.

A central challenge in this endeavor lies in aligning graph data with the natural language embedding space, so that LLMs can better comprehend graphs. To address this challenge, existing efforts mainly fall into two categories: Graph-Textualizing LLMs (GTextLLMs)~\cite{chen2024exploring,fatemitalk,guo2023gpt4graph,wang2023can,zhaographtext} and Graph-Tokenizing LLMs (GTokenLLMs)~\cite{chen2024llaga,tang2024graphgpt,zhang2024graphtranslator,wang2024llms,konggofa}.
The GTextLLMs utilize natural language to describe graphs via manually crafted prompt templates, directly aligning them with text. However, such strategies often suffer from redundancy and imprecise characterization~\cite{chen2024exploring,chen2024llaga}.
In contrast, GTokenLLMs construct various language instructions around graph tasks to perform tuning alignment, demonstrating superior performance while maintaining cost-effective training schemes.
As shown in Figure~\ref{fig:gtokenllm_pipeline}(a), GTokenLLMs typically adopt a TAG encoder to encode the complex structures and lengthy texts into a graph token sequence. This sequence is then aligned with the LLM embedding space via a projector, and subsequently used as prefix tokens for querying LLMs.
However, this pipeline: \textit{TAG encoder} $\rightarrow$ \textit{projector} $\rightarrow$ \textit{LLM} $\Leftarrow$ \textit{language instructions} (where ``$\Leftarrow$'' denotes supervision), is predominantly LLM-centric: (i) graph comprehension largely depends on graph-text alignment and the selected TAG encoder, and (ii) supervision derives exclusively from text data.
This naturally raises a question: 
\textit{Can language instructions alone effectively support graph–text alignment?}
Answering this question is essential for advancing more effective and principled alignment strategies.

Here, we analyze the relationship between language instruction tuning and graph–text alignment from an information-theoretic perspective.
In GTokenLLMs, the ideal optimization objective is to maximize the contribution of graph tokens to generating the current text token, given the preceding text context, which can be realized by maximizing the Conditional Graph–Text Mutual Information (\textbf{C-GTMI}).
We prove that C-GTMI can be decomposed into two main terms (see Theorem~\ref{theory:c-gtmi}): (i) a graph–text alignment term and (ii) a textual autoregressive dependency term. However, we find that the current GTokenLLMs~\cite{chen2024llaga,ye2024language,tang2024graphgpt} primarily optimize the second term, but neglect the explicit optimization of the first term, which achieves only implicit graph-text alignment. This leads to a bias toward textual information, potentially diminishing the significance of graph context in graph comprehension, yielding a suboptimal C-GTMI and limited alignment. 
Therefore, another question naturally arises: \textit{How can we more effectively optimize C-GTMI to achieve better alignment?} 
Considering that LLMs' semantic space is high-dimensional, designing a stable and effective optimization mechanism for this purpose remains a significant challenge.

In this paper, we first prove that the C-GTMI is upper-bounded by the mutual information between the input graph and its hidden representations in the LLM (see Lemma~\ref{lemma:upper_c-gtmi}). 
This result motivates us to improve this upper bound to achieve better alignment, and this improvement can be realized through reconstruction.
To this end, we further propose a novel alignment pipeline, \textbf{RGLM}, termed reconstructive graph instruction tuning. Our key idea is to reconstruct the graph information from the LLM's graph token outputs, which are largely ignored in current GTokenLLMs. As illustrated in Figure 1(b), this pipeline explicitly incorporates graph supervision to constrain the alignment process. Technically, we embody RGLM by exploring three variants from two complementary perspectives: RGLM-Decoder, defined from the input space; RGLM-Similarizer and -Denoiser, defined from the latent space. Additionally, we theoretically analyze how each RGLM variant optimizes the upper bound of the C-GTMI. 
Our main contributions are as follows: \\
$\bullet$ We are the first to theoretically reveal that current GTokenLLMs using language instructions tuning achieve only implicit graph-text alignment, and prove that the alignment objective is upper-bounded by the mutual information between the input graphs and their hidden representations in the LLM.\\
$\bullet$ We propose a reconstructive graph instruction tuning pipeline, called RGLM, and explore three distinct variants: RGLM-Decoder from the input space, RGLM-Similarizer and RGLM-Denoiser from the latent spaces. Additionally, we theoretically analyze the alignment effectiveness of each variant.\\
$\bullet$ Experimental results on four datasets demonstrate that the proposed RGLM, which introduces different reconstructive graph losses, achieves substantial improvements in supervised learning, multi-dataset generalization, and zero-shot capability, highlighting its potential as a graph foundation model.
\section{Preliminary}
\subsection{Text-attributed Graphs (TAGs)} 
We define a TAG as $\mathcal{G}_{\text{full}} \!=\! (\mathcal{V}_{\text{full}}, \mathcal{E}_{\text{full}}, \mathcal{S}_{\text{full}})$, where $\mathcal{V}_{\text{full}} \!=\! \{v_i\}_{i=1}^{|\mathcal{V}_{\text{full}}|}$, $\mathcal{E}_{\text{full}} \!=\! \{e_i\}_{i=1}^{|\mathcal{E}_{\text{full}}|}$, $\mathcal{S}_{\text{full}} \!=\! \{s_i\}_{i=1}^{|\mathcal{S}_{\text{full}}|}$ denote the node, edge, and text set, respectively. The text set $\mathcal{S}_{\text{full}}$ is typically encoded into node features $\mathbf{Z}_{\text{full}}\!=\! \{\mathbf{z}_1, \ldots, \mathbf{z}_{|\mathcal{V}_{\text{full}}|}\}$ using pre-trained language models~\cite{reimers2019sentence,devlin2019bert}. To efficiently process node-level tasks with LLMs, e.g., node classification, it is common to sample a subgraph centered at a node to reduce computational cost and irrelevant information~\cite{chen2024llaga,tang2024graphgpt}. Formally, we denote $\mathcal G_{\text{sub}}^{h}(v) \!\!=\!\! (\mathcal{V}_v^h, \mathcal{E}_v^h, \mathbf{Z}_v^h)$ as the $h$-hop subgraph centered at node $v$, where $\mathcal{V}_v^h$ includes all nodes within $h$ hops of $v$, $\mathcal{E}_v^h$ contains their interconnecting edges, and $\mathbf{Z}_v^h$ is the corresponding feature matrix. 
For simplicity, we use $\mathcal{G}\!=\!\{\mathcal{V}, \mathcal{E}, \mathbf{Z}\}$ to denote such subgraphs when clear from context.


\begin{figure}[tp]
    \centering
    \includegraphics[width=0.82\linewidth]{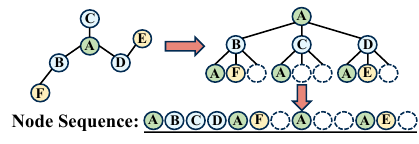}
    \vskip -0.2in
    \caption{Neighbor detail template for graph serialization~\cite{chen2024llaga}.}
    \vskip -0.225in
    \label{fig:ndt}
\end{figure}
\subsection{Graph-Tokenizing LLMs (GTokenLLMs)}
Generally, a $\theta$-parameterized LLM models the canonical causal distribution over a text token sequence $\boldsymbol{x}_{1:T}=(\boldsymbol{x}_1, \dots, \boldsymbol{x}_T)$ of length $T$ as $p_\theta(\boldsymbol{x})=\prod_{i=1}^Tp_\theta(\boldsymbol{x}_i|\boldsymbol{x}_{<i})$. 
To enable LLMs to understand graph content, GTokenLLMs encode complex structures and lengthy texts into a graph token sequence, and treat them as prefix tokens for querying LLMs. 
As illustrated in Figure~\ref{fig:overview}(a), a graph $\mathcal{G}$ is first encoded as a token sequence via a TAG encoder $\mathcal{F}$, where $\mathcal{F}$ can be a predefined GNN~\cite{zhang2024graphtranslator,konggofa,he2025unigraph} or a graph serialization template~\cite{chen2024llaga,tang2024graphgpt,ye2024language}. These tokens are subsequently projected into the LLM’s input space via a $\phi$-parameterized projector $\mathcal{P}$. Thus, the canonical causal distribution is reformulated as:
\begin{align}
p_\Theta(\boldsymbol{x})=\prod_{i=1}^Tp_\Theta(\boldsymbol{x}_i|\boldsymbol{v}, \boldsymbol{x}_{<i}),\quad\boldsymbol{v}=\mathcal{P}_\phi\circ\mathcal{F}\left(\mathcal{G}\right),
\end{align}
where $\Theta=\{\theta, \phi\}$ is the parameters, $\boldsymbol{v}_{1:N}=(\boldsymbol{v}_1,\dots,\boldsymbol{v}_N)$ is the graph token sequence of length $N$.
In this paper, we adopt the state-of-the-art graph serialization Neighbor Detail Template~\cite{chen2024llaga} as $\mathcal{F}$, owing to its elaborate descriptions of a node and its surrounding neighborhood, which is more suitable for our subsequent reconstruction task.
As illustrated in Figure~\ref{fig:ndt}, given a central node $A$, we construct a fixed-size computational tree rooted at $A$ to represent its $h$-hop subgraph. In this tree, $A$ serves as the root, its 1-hop neighbors are assigned as its children, 2-hop neighbors as the children of 1-hop neighbors, and so on. When a node has more neighbors than the predefined size, we randomly sample a subset; if it has fewer neighbors, we insert placeholder nodes to maintain structural consistency. The tree is then traversed in level-order to produce a fixed-length graph token sequence, where each token corresponds to a node and is represented by its feature $\mathbf{z}$. This process effectively transforms the graph structure into a sequence while preserving the hierarchical neighborhood information.

Typically, GTokenLLMs are trained with supervised instruction tuning, where the model parameters $\Theta$ are optimized by minimizing an autoregressive loss on language supervision (e.g., textual labels of node classification). 
In this setting, the autoregressive loss enforces next-token prediction, i.e., each input token yields one output conditioned on the preceding context. 
The overall objective is to maximize the log-likelihood of the supervised textual outputs.
\begin{align}
\label{eq:text_loss}
\mathcal{L}_{\mathrm{text}}\left(\Theta=\{\theta,\phi\},\mathcal{G},\boldsymbol{x}\right) 
= \frac{-1}{|\mathcal{S}|}\sum_{i\in\mathcal{S}}\log p_\Theta(\boldsymbol{x}_i \!\mid\! \boldsymbol{v}, \boldsymbol{x}_{<i}),
\end{align}
where $\mathcal{S}\subseteq \{N+1,\dots, T\}$ denotes the index set of textual label tokens in the input sequence. As indicated in Equation~(\ref{eq:text_loss}), only the text outputs $\boldsymbol{x}_{i \in \mathcal{S}}$ are supervised.

\section{Motivation Analysis}\label{sec:motivation}
\subsection{Implicit Graph-Text Alignment of Existing GTokenLLMs}
Given an input token sequence of length $T$, consisting of a prefix graph token sequence $\boldsymbol{v}_{1:N}=(\boldsymbol{x}_1, \dots, \boldsymbol{x}_N)$ and a text token sequence $\boldsymbol{x}_{1:M}=(\boldsymbol{x}_1, \dots, \boldsymbol{x}_M)$, where $M=T-N$.
At decoding step $t$, GTokenLLMs predict $\boldsymbol{x}_t$ conditioned on $\boldsymbol{v}$ and $\boldsymbol{x}_{<t}$. We denote entropy by $\mathcal{H}(\cdot)$ and mutual information by $\mathcal{I}(\cdot ; \cdot)$. The ideal optimization objective of GTokenLLMs is to maximize the contribution of graph tokens to generating the current text token, given the preceding text context. To formalize it, we first give the definition:
\begin{definition}[Conditional Graph-Text Mutual Information (C-GTMI)]
At decoding step $t$, let $\boldsymbol{s}^{G}$ and $\boldsymbol{s}^{T}_{<t}$ denote the LLM’s hidden representation of the graph tokens and the preceding text tokens, respectively.
The C-GTMI is defined as $\mathcal{I}(\boldsymbol{x}_t;\boldsymbol{s}^{G}\!\mid\! \boldsymbol{s}^{T}_{<t})$, 
which measures how much the graph tokens contribute to generating the current text token given the preceding text context.
\end{definition}
Based on the above definition, the optimization objective of GTokenLLMs can be realized by maximizing the C-GTMI.
\begin{theorem}[Implicit Graph-Text Alignment via Autoregressive Loss]\label{theory:c-gtmi}
The C-GTMI can be decomposed as:
\begin{equation}
\label{eq:chain}
\mathcal{I}(\boldsymbol{x}_t;\boldsymbol{s}^{G}\!\mid\!\boldsymbol{s}_{<t}^{T})
=\mathcal{I}(\boldsymbol{x}_t;\boldsymbol{s}^{G})
\;+\; \mathcal{I}(\boldsymbol{x}_t;\boldsymbol{s}_{<t}^{T}\!\mid\! \boldsymbol{s}^{G})
\;-\; \mathcal{I}(\boldsymbol{x}_t;\boldsymbol{s}_{<t}^{T}) ,
\end{equation}
where $\mathcal{I}(\boldsymbol{x}_t;\boldsymbol{s}^{G})$ reflects graph-text alignment, $\mathcal{I}(\boldsymbol{x}_t;\boldsymbol{s}_{<t}^{T}\!\mid\!\boldsymbol{s}^{G})$ captures textual autoregressive dependencies, and $\mathcal{I}(\boldsymbol{x}_t;\boldsymbol{s}_{<t}^{T})$ depends only on the textual data distribution. 
\end{theorem}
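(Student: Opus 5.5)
The plan is to obtain the decomposition from two applications of the chain rule for mutual information, applied to the joint quantity $\mathcal{I}(\boldsymbol{x}_t;\boldsymbol{s}^{G},\boldsymbol{s}^{T}_{<t})$. We treat $\boldsymbol{x}_t$, $\boldsymbol{s}^{G}$ and $\boldsymbol{s}^{T}_{<t}$ as jointly distributed random variables, induced by the data distribution over (graph, instruction) pairs and the fixed parameters $\Theta$. We assume the relevant entropies are finite (discrete token $\boldsymbol{x}_t$; for continuous hidden states, either differential entropies or quantization), so that all the mutual information terms are well defined and finite.

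\textbf{Step 1.} We expand the joint mutual information in the order that conditions on the text first:
\[
\mathcal{I}(\boldsymbol{x}_t;\boldsymbol{s}^{G},\boldsymbol{s}^{T}_{<t})=\mathcal{I}(\boldsymbol{x}_t;\boldsymbol{s}^{T}_{<t})+\mathcal{I}(\boldsymbol{x}_t;\boldsymbol{s}^{G}\mid\boldsymbol{s}^{T}_{<t}).
\]

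\textbf{Step 2.} We expand it in the order that conditions on the graph first:
\[
\mathcal{I}(\boldsymbol{x}_t;\boldsymbol{s}^{G},\boldsymbol{s}^{T}_{<t})=\mathcal{I}(\boldsymbol{x}_t;\boldsymbol{s}^{G})+\mathcal{I}(\boldsymbol{x}_t;\boldsymbol{s}^{T}_{<t}\mid\boldsymbol{s}^{G}).
\]
Each identity can be checked directly by writing every mutual information term as a difference of (conditional) entropies, e.g. $\mathcal{I}(\boldsymbol{x}_t;\boldsymbol{s}^{G}\mid\boldsymbol{s}^{T}_{<t})=\mathcal{H}(\boldsymbol{x}_t\mid\boldsymbol{s}^{T}_{<t})-\mathcal{H}(\boldsymbol{x}_t\mid\boldsymbol{s}^{G},\boldsymbol{s}^{T}_{<t})$; the intermediate entropy terms then telescope.

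\textbf{Step 3.} Setting the two expansions equal and solving for $\mathcal{I}(\boldsymbol{x}_t;\boldsymbol{s}^{G}\mid\boldsymbol{s}^{T}_{<t})$ gives Equation~(\ref{eq:chain}) exactly.

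\textbf{Step 4.} The interpretive part justifies the labels attached to the three terms.
\begin{itemize}
\item $\mathcal{I}(\boldsymbol{x}_t;\boldsymbol{s}^{T}_{<t})$ depends only on the joint law of text tokens and their preceding context. It is therefore fixed by the textual data distribution and can be viewed as a constant with respect to graph alignment.
\item The autoregressive loss in Equation~(\ref{eq:text_loss}) minimizes $\mathbb{E}[-\log p_\Theta(\boldsymbol{x}_t\mid\boldsymbol{v},\boldsymbol{x}_{<t})]$. This is a cross-entropy upper bound on $\mathcal{H}(\boldsymbol{x}_t\mid\boldsymbol{s}^{G},\boldsymbol{s}^{T}_{<t})$, so it increases the joint information. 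Read through the second expansion, this gain can be realized entirely through the term $\mathcal{I}(\boldsymbol{x}_t;\boldsymbol{s}^{T}_{<t}\mid\boldsymbol{s}^{G})$, i.e. text-to-text dependencies given the graph.
\item Nothing in the loss explicitly targets $\mathcal{I}(\boldsymbol{x}_t;\boldsymbol{s}^{G})$, so the graph–text alignment is only implicit.
\end{itemize}

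The main obstacle is not the identity itself, which is a routine consequence of the chain rule. The difficulty is making the interpretation rigorous: the hidden states $\boldsymbol{s}^{G}$ and $\boldsymbol{s}^{T}_{<t}$ are deterministic functions of the inputs, which may make some differential entropies degenerate. We would handle this by stating the result for discrete or quantized representations, or by noting that the chain rule holds for general mutual information defined via KL divergence. That form requires no density assumptions.
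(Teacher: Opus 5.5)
Your proof is correct and follows the paper's route: the paper likewise links the cross-entropy loss to $\mathcal{H}(\boldsymbol{x}_t\mid\boldsymbol{s}^{G},\boldsymbol{s}^{T}_{<t})$ and then invokes the chain rule of mutual information, and your two-way expansion of $\mathcal{I}(\boldsymbol{x}_t;\boldsymbol{s}^{G},\boldsymbol{s}^{T}_{<t})$ just spells out that one-line chain-rule step. One caveat applies to the paper as much as to you: $\boldsymbol{s}^{T}_{<t}$ are hidden states computed with the graph tokens as prefix (and depend on $\Theta$), so the claim that $\mathcal{I}(\boldsymbol{x}_t;\boldsymbol{s}^{T}_{<t})$ is fixed by the text distribution alone is a heuristic reading, not a consequence of the identity.
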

Theorem~\ref{theory:c-gtmi} reveals that the autoregressive loss on language supervision primarily optimizes the textual autoregressive dependency, but only implicitly encourages graph-text alignment, namely that $\mathcal{I}(\boldsymbol{x}_t;\boldsymbol{s}^{G})$ is not explicitly optimized. Consequently, the model tends to overly rely on textual information, potentially diminishing the significance of graph context in graph comprehension, yielding a suboptimal C-GTMI and limited performance.
\begin{proof}
In practice, the autoregressive training objective of GTokenLLMs is the cross-entropy loss 
between the true distribution $p$ and the model distribution $q_\theta$. 
Equivalently, at step $t$, this can be written as the negative log-likelihood:
\begin{equation}
\begin{aligned}
\label{eq:ce}
\mathcal{L}_{\mathrm{CE}}
&=-\mathbb{E}\left[\log q_{\theta}\left(\boldsymbol{x}_t\!\mid\! \boldsymbol{v}, \boldsymbol{x}_{<t}\right)\right] \\
&=\mathcal{H}(\boldsymbol{x}_t\!\mid\! \boldsymbol{v},\boldsymbol{x}_{<t})
+\mathbb{E}\left[\mathrm{KL}p(\boldsymbol{x}_t\!\mid\!\cdot)\parallel q_\theta(\boldsymbol{x}_t\!\mid\!\cdot)\right] \\
&\ge\mathcal{H}\left(\boldsymbol{x}_t\!\mid\! \boldsymbol{v},\boldsymbol{x}_{<t}\right).
\end{aligned}
\end{equation}
Since decoder-only LLMs generate tokens based on their internal hidden states rather than directly on raw input tokens, the conditional probability of the next text token can be equivalently expressed in terms of these hidden representations. Thus, we have:
\begin{equation}
\label{eq:sufficient}
q_{\theta}(\boldsymbol{x}_t\!\mid\! \boldsymbol{v},\boldsymbol{x}_{<t})
= q_\theta(\boldsymbol{x}_t\!\mid\! \boldsymbol{s}^{G},\boldsymbol{s}_{<t}^T).
\end{equation}
This formulation highlights that the prediction of the next token depends solely on the internal hidden representations of both modalities, providing a foundation for analyzing alignment at the level of hidden states. Substituting this into Equation~(\ref{eq:ce}) yields:
\begin{equation}
\begin{aligned}
\label{eq:ce-avg}
\mathcal{L}_{\mathrm{CE}}
= -\mathbb{E}[\log q_{\theta}(\boldsymbol{x}_t\!\mid\! \boldsymbol{s}^{G}, \boldsymbol{s}^{T}_{<t})] \ge \mathcal{H}(\boldsymbol{x}_t \!\mid\! \boldsymbol{s}^{G}, \boldsymbol{s}^{T}_{<t}).
\end{aligned}
\end{equation}
According to the mutual information identity:
\begin{align}
\mathcal{H}(\boldsymbol{x}_t\!\mid\! \boldsymbol{s}^{G},\boldsymbol{s}_{<t}^{T})
=\mathcal{H}(\boldsymbol{x}_t\!\mid\! \boldsymbol{s}_{<t}^{T})-\mathcal{I}(\boldsymbol{x}_t;\boldsymbol{s}^{G}\!\mid\! \boldsymbol{s}_{<t}^{T}).
\end{align}
Intuitively, $\mathcal{H}(\boldsymbol{x}_t\!\!\mid\!\!\boldsymbol{s}_{<t}^{T})$ is determined by the data distribution, and minimizing $\mathcal{H}(\boldsymbol{x}_t\!\!\mid\!\!\boldsymbol{s}^G, \boldsymbol{s}_{<t}^T)$ is equivalent to maximizing the C-GTMI $\mathcal{I}(\boldsymbol{x}_t; \boldsymbol{s}^G\!\mid\!\boldsymbol{s}_{<t}^T)$. By the chain rule of mutual information, we have:
\begin{equation}
\mathcal{I}(\boldsymbol{x}_t;\boldsymbol{s}^{G}\!\mid\!\boldsymbol{s}_{<t}^{T})
=\mathcal{I}(\boldsymbol{x}_t;\boldsymbol{s}^{G})
\;+\; \mathcal{I}(\boldsymbol{x}_t;\boldsymbol{s}_{<t}^{T}\!\mid\! \boldsymbol{s}^{G})
\;-\; \mathcal{I}(\boldsymbol{x}_t;\boldsymbol{s}_{<t}^{T}).
\end{equation}


\end{proof}

\vspace{-13pt}
\subsection{Towards Explicit Graph–Text Alignment}
To achieve better alignment by more effectively optimizing C-GTMI, we first theoretically establish an upper bound for C-GTMI:
\begin{lemma}[Upper Bound of C-GTMI]\label{lemma:upper_c-gtmi}
Given an input graph $\mathcal{G}$, the C-GTMI is upper-bounded by the mutual information between the graph $\mathcal{G}$ and its hidden representations $\boldsymbol{s}^G$ in the LLM, i.e.,
\begin{align}
\mathcal{I}(\boldsymbol{x}_t; \boldsymbol{s}^G \!\mid\! \boldsymbol{s}^{T}_{<t})
\le \mathcal{I}(\mathcal{G}; \boldsymbol{s}^G).
\end{align}
\end{lemma}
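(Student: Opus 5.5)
The plan is a data-processing argument built on a Markov structure that follows from how GTokenLLMs compute their hidden states. I will state the modeling assumptions explicitly, since the lemma is information-theoretic and only holds under them. The graph $\mathcal{G}$ is mapped deterministically, or with independent randomness, to $\boldsymbol{v}=\mathcal{P}_\phi\circ\mathcal{F}(\mathcal{G})$, and the LLM produces $\boldsymbol{s}^G$ from $\boldsymbol{v}$. The preceding text representation $\boldsymbol{s}^T_{<t}$ is formed from $\boldsymbol{x}_{<t}$ and $\boldsymbol{s}^G$ through causal attention. Finally, $\boldsymbol{x}_t$ is generated from $(\boldsymbol{s}^G,\boldsymbol{s}^T_{<t})$, as in Equation~(\ref{eq:sufficient}). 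The key structural assumption is that every dependence of $(\boldsymbol{x}_t,\boldsymbol{s}^T_{<t})$ on the graph flows through $\mathcal{G}$: the graph content relevant to the label and the text is that of $\mathcal{G}$ itself. This gives the Markov chain
\[
\boldsymbol{s}^G \;\to\; \mathcal{G} \;\to\; (\boldsymbol{x}_t,\boldsymbol{s}^T_{<t})
\quad\text{or, more loosely,}\quad
(\boldsymbol{x}_t,\boldsymbol{s}^T_{<t}) \;-\; \mathcal{G} \;-\; \boldsymbol{s}^G .
\]

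With the chain in place, I will proceed in three steps. First, by the chain rule,
\[
\mathcal{I}(\boldsymbol{x}_t;\boldsymbol{s}^G\mid\boldsymbol{s}^T_{<t})
=\mathcal{I}(\boldsymbol{x}_t,\boldsymbol{s}^T_{<t};\boldsymbol{s}^G)-\mathcal{I}(\boldsymbol{s}^T_{<t};\boldsymbol{s}^G),
\]
so, since mutual information is nonnegative,
\[
\mathcal{I}(\boldsymbol{x}_t;\boldsymbol{s}^G\mid\boldsymbol{s}^T_{<t})\le \mathcal{I}(\boldsymbol{x}_t,\boldsymbol{s}^T_{<t};\boldsymbol{s}^G).
\]
Second, the data-processing inequality along the chain $(\boldsymbol{x}_t,\boldsymbol{s}^T_{<t})-\mathcal{G}-\boldsymbol{s}^G$ gives
\[
\mathcal{I}(\boldsymbol{x}_t,\boldsymbol{s}^T_{<t};\boldsymbol{s}^G)\le \mathcal{I}(\mathcal{G};\boldsymbol{s}^G).
\]
Third, combining the two inequalities yields the claim.

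As a fallback, in case the joint chain is too strong, I can use an alternative route that needs weaker conditions. Write
\[
\mathcal{I}(\boldsymbol{x}_t;\boldsymbol{s}^G\mid\boldsymbol{s}^T_{<t})\le \mathcal{H}(\boldsymbol{s}^G\mid\boldsymbol{s}^T_{<t})-\mathcal{H}(\boldsymbol{s}^G\mid\boldsymbol{x}_t,\boldsymbol{s}^T_{<t},\mathcal{G}).
\]
Then use $\mathcal{H}(\boldsymbol{s}^G\mid\boldsymbol{s}^T_{<t})\le\mathcal{H}(\boldsymbol{s}^G)$ together with the conditional independence $\mathcal{H}(\boldsymbol{s}^G\mid\boldsymbol{x}_t,\boldsymbol{s}^T_{<t},\mathcal{G})=\mathcal{H}(\boldsymbol{s}^G\mid\mathcal{G})$. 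The latter holds because $\boldsymbol{s}^G$ is computed from $\mathcal{G}$ alone, since graph tokens are prefixes and causal attention does not let later text influence them. The difference is then exactly $\mathcal{I}(\mathcal{G};\boldsymbol{s}^G)$. The prefix-causality observation is what makes this route convincing: $\boldsymbol{s}^G$ is a function of $\boldsymbol{v}$, hence of $\mathcal{G}$, together with any independent noise.

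I expect the main obstacle to be justifying the conditional-independence or Markov condition rigorously. The text tokens $\boldsymbol{x}_{<t}$ and the label $\boldsymbol{x}_t$ are correlated with $\mathcal{G}$, but they are not produced from $\boldsymbol{s}^G$. What I actually need is only that $\boldsymbol{s}^G$ is independent of $(\boldsymbol{x}_t,\boldsymbol{x}_{<t})$ given $\mathcal{G}$. This follows from causal masking, because prefix hidden states cannot attend to later tokens, and from the fact that the encoder's sampling randomness is independent of the text. I will state it as the one modeling assumption and derive everything else from standard identities.
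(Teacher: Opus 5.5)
Your argument is sound when $\boldsymbol{s}^G$ is a deterministic function of $\mathcal{G}$. However, the justification you give for the Markov step is wrong, and the extension to ``independent randomness'' is false.

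Both of your routes need $\mathcal{I}(\boldsymbol{s}^G;\boldsymbol{x}_t,\boldsymbol{s}^T_{<t}\mid\mathcal{G})=0$. Your fallback identity $\mathcal{H}(\boldsymbol{s}^G\mid\boldsymbol{x}_t,\boldsymbol{s}^T_{<t},\mathcal{G})=\mathcal{H}(\boldsymbol{s}^G\mid\mathcal{G})$ is literally equivalent to this condition, so the fallback is not weaker.

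The condition you single out at the end is that $\boldsymbol{s}^G$ be independent of the raw text $(\boldsymbol{x}_t,\boldsymbol{x}_{<t})$ given $\mathcal{G}$. It does not imply the needed one. By your own description, $\boldsymbol{s}^T_{<t}$ is computed from $\boldsymbol{s}^G$, so the graph dependence of $\boldsymbol{s}^T_{<t}$ flows through $\boldsymbol{s}^G$, not merely through $\mathcal{G}$. Causal masking only blocks influence from text positions to graph positions. Conditional independence is symmetric, and it is broken by the graph-to-text direction whenever $\boldsymbol{s}^G$ carries noise beyond $\mathcal{G}$.

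Concretely, take $\mathcal{G}$ constant and let $\omega,b$ be independent uniform bits. Set $\boldsymbol{s}^G=\omega$, $\boldsymbol{x}_{<t}=\boldsymbol{x}_t=b$, and $\boldsymbol{s}^T_{<t}=b\oplus\omega$. Every condition you list holds: the noise is independent of the text, $\boldsymbol{s}^T_{<t}$ is a function of the text and $\boldsymbol{s}^G$, and $\boldsymbol{x}_t$ is a function of $(\boldsymbol{s}^G,\boldsymbol{s}^T_{<t})$. Yet $\mathcal{I}(\boldsymbol{x}_t;\boldsymbol{s}^G\mid\boldsymbol{s}^T_{<t})=1$ bit while $\mathcal{I}(\mathcal{G};\boldsymbol{s}^G)=0$. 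So the lemma itself fails under encoder noise. The neighbor detail template's random neighbor sampling actually introduces such noise; the paper sidesteps this by idealizing $\boldsymbol{s}^G$ as deterministic in $\mathcal{G}$.

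If you replace \emph{deterministically, or with independent randomness} by determinism alone, then $\mathcal{I}(\boldsymbol{s}^G;Y\mid\mathcal{G})=0$ for every $Y$. The Markov chain is then automatic, and your chain-rule-plus-data-processing argument is a correct proof by a different route.

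The paper's route is as follows:
\begin{itemize}
\item It expands $\mathcal{I}(\mathcal{G};\boldsymbol{s}^G,\boldsymbol{x}_t\mid\boldsymbol{s}^T_{<t})$ in two orders.
\item Using the symmetry of the three-way interaction information and $\mathcal{I}(\boldsymbol{x}_t;\boldsymbol{s}^G\mid\mathcal{G},\boldsymbol{s}^T_{<t})=0$, it obtains the identity $\mathcal{I}(\boldsymbol{x}_t;\boldsymbol{s}^G\mid\boldsymbol{s}^T_{<t})=\mathcal{H}(\boldsymbol{s}^G\mid\boldsymbol{s}^T_{<t})-\mathcal{I}(\mathcal{G};\boldsymbol{s}^G\mid\boldsymbol{x}_t,\boldsymbol{s}^T_{<t})$.
\item It then bounds this by $\mathcal{H}(\boldsymbol{s}^G\mid\boldsymbol{s}^T_{<t})\le\mathcal{H}(\boldsymbol{s}^G)=\mathcal{I}(\mathcal{G};\boldsymbol{s}^G)$.
\end{itemize}
Your fallback is essentially that entropy route.

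Your main route has three advantages: it is shorter, it never needs to identify $\mathcal{I}(\mathcal{G};\boldsymbol{s}^G)$ with a finite discrete entropy, and it isolates the single conditional independence actually used. The paper's route instead exhibits the quantity through an exact identity before dropping nonnegative terms. The two discard exactly the same slack, namely $\mathcal{I}(\boldsymbol{s}^T_{<t};\boldsymbol{s}^G)+\mathcal{I}(\mathcal{G};\boldsymbol{s}^G\mid\boldsymbol{x}_t,\boldsymbol{s}^T_{<t})$.
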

Lemma~\ref{lemma:upper_c-gtmi} motivates us to optimize the lower bound of $\mathcal{I}(\mathcal{G}; \boldsymbol{s}^G)$, and this improvement can be further realized through reconstruction~\cite{chen2016infogan,hjelmlearning,wangreconstructive,wang2025ross3d}, which in turn improves the attainable upper bound of C-GTMI. This not only explicitly optimizes the graph–text alignment term $\mathcal{I}(\boldsymbol{x}_t;\boldsymbol{s}^{G})$, but also enhances autoregressive dependency modeling $\mathcal{I}(\boldsymbol{x}_t;\boldsymbol{s}{<t}^{T}\!\mid\!\boldsymbol{s}^{G})$, thereby achieving better alignment.
\begin{proof}
Based on the chain rule of mutual information, we first analyze the $\mathcal{I}(\mathcal{G}, \boldsymbol{s}^G, \boldsymbol{x}_t)$ under the condition $\boldsymbol{s}^{T}_{<t}$ and have:
\begin{equation}
\begin{aligned}
\mathcal{I}(\mathcal{G};\boldsymbol{s}^{G},\boldsymbol{x}_t\!\mid\! \boldsymbol{s}^{T}_{<t})
&=\mathcal{I}(\mathcal{G};\boldsymbol{s}^{G}\!\mid\! \boldsymbol{s}^{T}_{<t})
+\mathcal{I}(\mathcal{G};\boldsymbol{x}_t\!\mid\! \boldsymbol{s}^{G},\boldsymbol{s}^{T}_{<t}) \\
&=\mathcal{I}(\mathcal{G};\boldsymbol{x}_t\!\mid\! \boldsymbol{s}^{T}_{<t})
+\mathcal{I}(\mathcal{G};\boldsymbol{s}^{G}\!\mid\! \boldsymbol{x}_t,\boldsymbol{s}^{T}_{<t}).
\end{aligned}
\end{equation}
By equating the two expressions and rearranging terms, and then applying the symmetry of mutual information, we obtain:
\begin{equation}
\begin{aligned}
\mathcal{I}(\mathcal{G};\boldsymbol{s}^{G}\!\mid\! \boldsymbol{s}^{T}_{<t})
&- \mathcal{I}(\mathcal{G};\boldsymbol{s}^{G}\!\mid\! \boldsymbol{x}_t,\boldsymbol{s}^{T}_{<t}) \\
&= \mathcal{I}(\mathcal{G};\boldsymbol{x}_t\!\mid\! \boldsymbol{s}^{T}_{<t})
- \mathcal{I}(\mathcal{G};\boldsymbol{x}_t\!\mid\! \boldsymbol{s}^{G},\boldsymbol{s}^{T}_{<t}) \\
&= \mathcal{I}(\boldsymbol{x}_t;\mathcal{G}\!\mid\! \boldsymbol{s}^{T}_{<t})
- \mathcal{I}(\boldsymbol{x}_t;\mathcal{G}\!\mid\! \boldsymbol{s}^{G},\boldsymbol{s}^{T}_{<t}) \\
&=\mathcal{I}(\boldsymbol{x}_t;\boldsymbol{s}^{G}\!\mid\! \boldsymbol{s}^{T}_{<t})-\mathcal{I}(\boldsymbol{x}_t;\boldsymbol{s}^{G}\!\mid\! \mathcal{G},\boldsymbol{s}^{T}_{<t}).
\end{aligned}
\end{equation}
Given that graph tokens $\boldsymbol{v}$ are typically input as prefix tokens into decoder-only LLMs, the $\boldsymbol{s}^G$ are deterministically derived from $\mathcal{G}$, thus we have $\mathcal{I}(\boldsymbol{x}_t;\boldsymbol{s}^{G}\!\mid\! \mathcal{G},\boldsymbol{s}^{T}_{<t})=0$,
yielding the identity:
\begin{equation}
\begin{aligned}
\mathcal{I}(\boldsymbol{x}_t;\boldsymbol{s}^{G}\!\mid\! \boldsymbol{s}^{T}_{<t})
&=\mathcal{I}(\mathcal{G};\boldsymbol{s}^{G}\!\mid\! \boldsymbol{s}^{T}_{<t})
-\mathcal{I}(\mathcal{G};\boldsymbol{s}^{G}\!\mid\! \boldsymbol{x}_t,\boldsymbol{s}^{T}_{<t}) \\
&=\mathcal{H}(\boldsymbol{s}^{G}\!\mid\! \boldsymbol{s}^{T}_{<t})-\mathcal{I}(\mathcal{G};\boldsymbol{s}^{G}\!\mid\! \boldsymbol{x}_t,\boldsymbol{s}^{T}_{<t})\\
&\le \mathcal{H}(\boldsymbol{s}^G \!\mid\! \boldsymbol{s}^{T}_{<t}).
\end{aligned}
\end{equation}
Since $\mathcal{I}(\boldsymbol{s}^G; \boldsymbol{s}^{T}_{<t}) 
= \mathcal{H}(\boldsymbol{s}^G) - \mathcal{H}(\boldsymbol{s}^G \!\mid\! \boldsymbol{s}^{T}_{<t}) \ge 0$, it follows:
\begin{align}
\mathcal{H}(\boldsymbol{s}^G \!\mid\! \boldsymbol{s}^{T}_{<t}) \le \mathcal{H}(\boldsymbol{s}^G).
\end{align}
Moreover, since $\boldsymbol{s}^G$ is a deterministic function of $\mathcal{G}$:
\begin{align}
\mathcal{I}(\mathcal{G}; \boldsymbol{s}^G) = \mathcal{H}(\boldsymbol{s}^G) - \mathcal{H}(\boldsymbol{s}^G \!\mid\! \mathcal{G}) = \mathcal{H}(\boldsymbol{s}^G).
\end{align}
Combining the inequalities yields the final result:
\begin{align}
\mathcal{I}(\boldsymbol{x}_t; \boldsymbol{s}^G \!\mid\! \boldsymbol{s}^{T}_{<t})
\le \mathcal{I}(\mathcal{G}; \boldsymbol{s}^G).
\end{align}
\end{proof}

\vspace{-15pt}
\section{Methodology}

\subsection{Overview}
Motivated by Lemma~\ref{lemma:upper_c-gtmi}, we propose a reconstructive graph instruction tuning pipeline, called RGLM, which explicitly optimizes the lower bound of $\mathcal{I}(\mathcal{G}; \boldsymbol{s}^G)$ through extra reconstructive graph tasks.
As shown in Figure~\ref{fig:overview}(a), a graph $\mathcal{G}$ is first serialized into a token sequence using the neighbor detail template $\mathcal{F}$. These tokens are aligned with the text space via a $\phi$-parameterized projector, and then used as prefix tokens to query the $\theta$-parameterized LLM.
Our RGLM aims to reconstruct graph information from the LLM-generated graph token representation $\boldsymbol{s}^{G}$. Since $\boldsymbol{s}^{G}$ may contain multiple tokens for the same node (e.g., node $A$ appears four times in Figure~\ref{fig:ndt}), we apply mean aggregation over these tokens to obtain the reconstruction token representation $\mathbf{H}=\{\mathbf{h}_1, \cdots, \mathbf{h}_{|\mathcal{V}|}\}$:
\begin{equation}
\mathbf{h}_v = \frac{1}{|\Gamma(v)|} \sum_{i \in \Gamma(v)} \boldsymbol{s}^{G}_i, 
\quad \forall v \in \mathcal{V},
\end{equation}
where $\Gamma(v) \subseteq \{1,\ldots,N\}$ denotes the index set of tokens associated with node $v$ in $\boldsymbol{s}^{G}$. 
Here, to reduce computational cost and avoid catastrophic forgetting~\cite{kirkpatrick2017overcoming}, we adopt LoRA~\cite{hu2022lora} for fine-tuning the LLM. The training objective comprises two components: 
\begin{align}
\mathcal{L}_{\text{RGLM}} = \mathcal{L}_{\text{text}} + \mathcal{L}_{\text{graph}},
\end{align}
where $\mathcal{L}_{\text{text}}$ is the textual autoregressive loss defined in Equation~(\ref{eq:text_loss}), and $\mathcal{L}_{\text{graph}}$ is an extra reconstruction term on graph data. The $\mathcal{L}_{\text{graph}}$ is defined using a customized metric $\mathcal{M}$ that measures the discrepancy between the reconstruction tokens $\mathbf{H}$ and a predefined target derived from the graph $\mathcal{G}$. Formally, this loss is expressed as:
\begin{align}\label{eq:reconstrucion}
    \mathcal{L}_{\text{graph}}(\Theta=\{\phi, \theta, \pi\}, \mathcal{G}, \boldsymbol{x})=\mathcal{M}\left( \mathcal{J}_{\pi}\left(\mathbf{H}\right), \mathcal{R}\left(\mathcal{G}\right)\right),
\end{align}
where $\mathcal{J}_{\pi}$ is a $\pi$-parameterized post-projection function, which maps $\mathbf{H}$ to be consistent with the predefined target. The function $\mathcal{R}$ constructs the reconstruction target from $\mathcal{G}$, which may include any observable graph signals. 
Evidently, the choice of $\mathcal{R}$ and $\mathcal{M}$ yields different variants.
In the following, we explore three variants with alignment guarantees from two complementary perspectives. In the input space, we consider a variant that directly reconstructs the raw node features and topology.
In the latent space of pre-trained GNNs, we examine two semantic variants that reconstruct latent node representations: one uses a similarity-based objective while the other employs a denoising strategy.
\begin{figure*}[tp]
    \centering
    \includegraphics[width=1\linewidth]{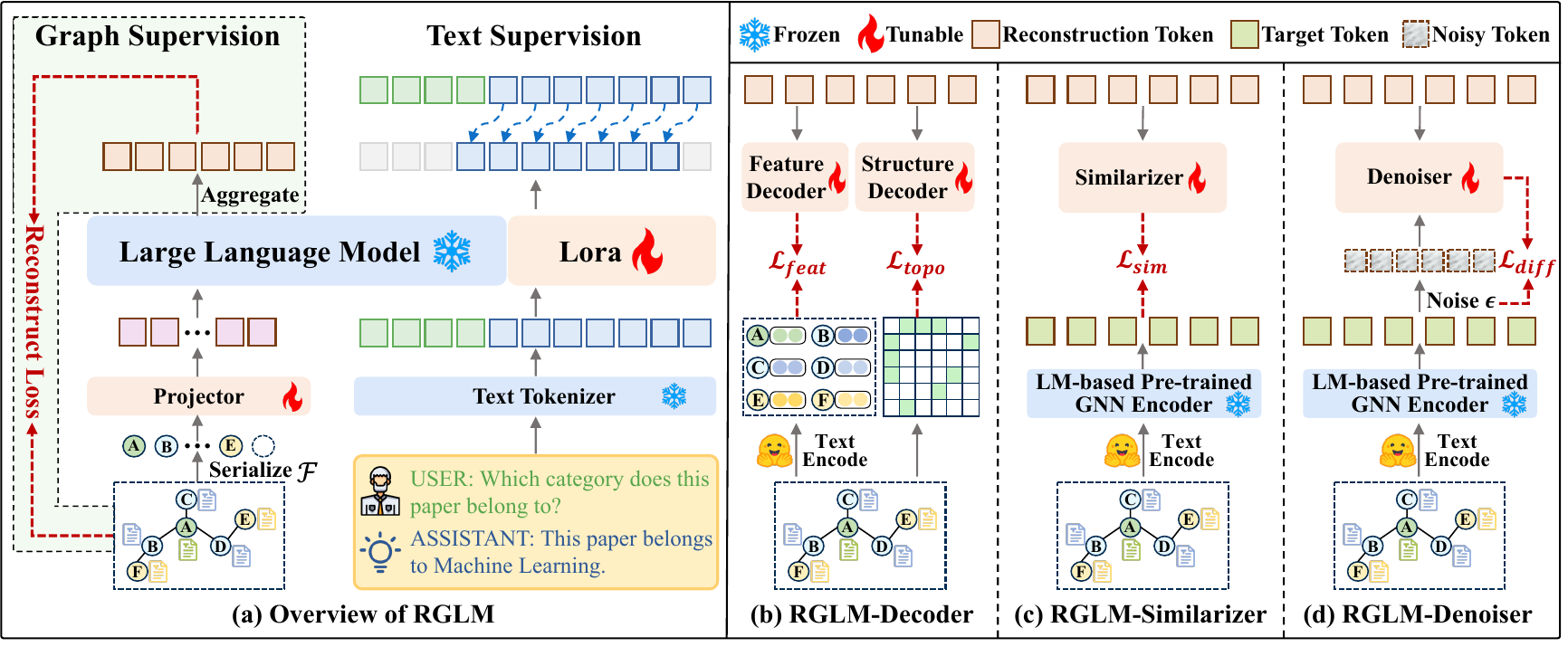}
    \vskip -0.15in
    \caption{(a) Overview of RGLM. Given an input TAG, RGLM aims to reconstruct the graph information from the LLM's graph token outputs, explicitly incorporating graph supervision to constrain the alignment process. (b) Decoder variant directly reconstructs the raw node features and topology in the input space. (c) Similarizer and (d) Denoiser variants reconstruct latent node representations of pre-trained GNNs via the cosine-similarity and denoising strategy, respectively.
    }
    \vskip -0.15in
    \label{fig:overview}
\end{figure*}
\subsection{Raw Graph Reconstruction in Input Space}
In the input space, we introduce RGLM-Decoder, which reconstructs the raw graph information from $\mathbf{H}$.
As shown in Figure~\ref{fig:overview}(b), we train a feature decoder $d_f$ and a structure decoder $d_s$ to reconstruct the node features and the adjacency, respectively. In this setting, $\mathcal{J}_{\pi}$ is instantiated by $d_f$ and $d_s$, while $\mathcal{R}$ consists of a pre-trained language model for text encoding and an adjacency matrix extractor. The metric $\mathcal{M}$ is defined as the mean squared error (MSE) for feature reconstruction and the binary cross entropy (BCE) for structure reconstruction. The objective function is formulated as:
\begin{align}
\mathcal{L}_{\text{graph}} 
&= \lambda_f \cdot \mathcal{L}_{\text{feat}} 
+ \lambda_s \cdot \mathcal{L}_{\text{topo}}, \\
\mathcal{L}_{\text{feat}} 
&= \frac{1}{|\mathcal{V}|} \sum_{i=1}^{|\mathcal{V}|} 
   \big\| d_f(\mathbf{h}_i) - \mathbf{z}_i \big\|_2^2, \\
\mathcal{L}_{\text{topo}} 
&= - \frac{1}{|\mathcal{E}|} \sum_{(i,j)\in\mathcal{E}} 
      \log \sigma \big(d_s(\mathbf{h}_i,\mathbf{h}_j)\big) \nonumber \\
&\quad - \frac{1}{|\hat{\mathcal{E}}|} \sum_{(i,j')\in\hat{\mathcal{E}}} 
      \log \big( 1 - \sigma(d_s(\mathbf{h}_i,\mathbf{h}_{j'})) \big),
\end{align}
where $\mathcal{E}$ and $\hat{\mathcal{E}}$ represent sets of existing and non-existing connections in $\mathcal{G}$, respectively, and $\mathbf{z}$ denotes the node feature. $\sigma(\cdot)$ is the sigmoid function, $\lambda_f$ and $\lambda_s$ balance the feature and structural reconstruction losses, respectively. 

\begin{proposition}[Lower Bound via RGLM-Decoder] \label{prop:raw_graph_recon}
Given a graph $\mathcal{G}=(\mathbf{Z}, \mathbf{A})$, where $\mathbf{Z}$ denotes the node feature matrix and $\mathbf{A}$ the adjacency matrix, minimizing the reconstruction losses $\mathcal{L}_{\text{feat}}$ and $\mathcal{L}_{\text{topo}}$ provides a variational lower bound of $\mathcal{I}(\mathcal{G}; \boldsymbol{s}^G)$:
\begin{align}
\mathcal{I}(\mathcal{G};\boldsymbol{s}^{G})
\ \ge\ \mathcal{H}(\mathbf{Z})+\mathcal{H}(\mathbf{A})
\ -\ \tfrac{1}{\lambda_f}\mathcal{L}_{\text{feat}}
\ -\ \tfrac{1}{\lambda_s}\mathcal{L}_{\text{topo}}.
\end{align}
\end{proposition}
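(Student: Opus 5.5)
We use the standard Barber--Agakov variational argument, applied separately to features and topology, with the decoders read as variational posteriors.

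\emph{Step 1: write the mutual information as entropy minus conditional entropy.} Since $\mathcal{G}=(\mathbf{Z},\mathbf{A})$, we write
\[
\mathcal{I}(\mathcal{G};\boldsymbol{s}^G)=\mathcal{H}(\mathbf{Z},\mathbf{A})-\mathcal{H}(\mathbf{Z},\mathbf{A}\mid \boldsymbol{s}^G).
\]

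\emph{Step 2: split the joint entropy.} We adopt the simplifying assumption, standard in such analyses, that $\mathbf{Z}$ and $\mathbf{A}$ are independent, so $\mathcal{H}(\mathbf{Z},\mathbf{A})=\mathcal{H}(\mathbf{Z})+\mathcal{H}(\mathbf{A})$. For the conditional term we use subadditivity:
\[
\mathcal{H}(\mathbf{Z},\mathbf{A}\mid\boldsymbol{s}^G)\le \mathcal{H}(\mathbf{Z}\mid\boldsymbol{s}^G)+\mathcal{H}(\mathbf{A}\mid\boldsymbol{s}^G).
\]
Hence
\[
\mathcal{I}(\mathcal{G};\boldsymbol{s}^G)\ge \mathcal{H}(\mathbf{Z})+\mathcal{H}(\mathbf{A})-\mathcal{H}(\mathbf{Z}\mid\boldsymbol{s}^G)-\mathcal{H}(\mathbf{A}\mid\boldsymbol{s}^G).
\]

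\emph{Step 3: bound each conditional entropy by a variational cross-entropy.} For any conditional density $q$, non-negativity of the KL divergence gives
\[
\mathcal{H}(\mathbf{Z}\mid\boldsymbol{s}^G)\le -\mathbb{E}[\log q_f(\mathbf{Z}\mid\boldsymbol{s}^G)].
\]
\begin{itemize}
\item \textbf{Features.} Take $q_f$ to be a Gaussian with mean $d_f(\mathbf{h}_i)$ for each node, factorized over nodes. The node representations $\mathbf{h}_i$ are deterministic mean aggregations of $\boldsymbol{s}^G$, so conditioning on $\mathbf{H}$ is conditioning on a function of $\boldsymbol{s}^G$. The negative log-likelihood is then an affine function of the squared error $\sum_i\|d_f(\mathbf{h}_i)-\mathbf{z}_i\|_2^2$.
\item \textbf{Topology.} Take $q_s$ to be a product of Bernoullis with parameter $\sigma(d_s(\mathbf{h}_i,\mathbf{h}_j))$. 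Its negative log-likelihood is the BCE, which matches $\mathcal{L}_{\text{topo}}$ when restricted to the sampled positive set $\mathcal{E}$ and negative set $\hat{\mathcal{E}}$, with the averaging weights absorbed.
\end{itemize}

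\emph{Step 4: match constants to the loss weights.} This is where we expect the main obstacle: making the scaling $1/\lambda_f$ and $1/\lambda_s$ come out exactly rather than up to additive constants.
\begin{itemize}
\item For the Gaussian we would choose the variance so that the quadratic coefficient equals $1/\lambda_f$, after normalization by $|\mathcal{V}|$. We then absorb the log-partition constant, either by treating it as an additive constant folded into $\mathcal{H}(\mathbf{Z})$ or by working with the differential-entropy convention where it is taken as zero.
\item For the Bernoulli term we would interpret $\lambda_s$ as a tempering of the likelihood, i.e. $q_s\propto p^{1/\lambda_s}$, whose normalizer is at most $1$ and therefore only helps the inequality.
\end{itemize}
If exact constants cannot be matched, we would state the bound up to these additive constants, consistent with the ``variational'' phrasing of the proposition.

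\emph{Step 5: combine.} Substituting the two cross-entropy bounds from Steps 3--4 into the inequality of Step 2 gives
\[
\mathcal{I}(\mathcal{G};\boldsymbol{s}^G)\ge\mathcal{H}(\mathbf{Z})+\mathcal{H}(\mathbf{A})-\tfrac{1}{\lambda_f}\mathcal{L}_{\text{feat}}-\tfrac{1}{\lambda_s}\mathcal{L}_{\text{topo}}.
\]
Since $\mathcal{H}(\mathbf{Z})$ and $\mathcal{H}(\mathbf{A})$ are fixed by the data, minimizing $\mathcal{L}_{\text{feat}}$ and $\mathcal{L}_{\text{topo}}$ tightens this lower bound.
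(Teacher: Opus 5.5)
Your argument is the paper's proof, slightly cleaned up. Both use Barber--Agakov separately on $\mathbf{Z}$ and $\mathbf{A}$, with a Gaussian feature decoder and a product-Bernoulli edge decoder, under the assumption $\mathbf{Z}\perp\mathbf{A}$. Letting $q$ depend on $\boldsymbol{s}^{G}$ only through $\mathbf{H}$ does the job of the paper's data-processing step $\mathcal{I}(\mathcal{G};\boldsymbol{s}^{G})\ge\mathcal{I}(\mathcal{G};\mathbf{H})$. Your Step~2 is more accurate than the paper's, which asserts $\mathcal{I}(\mathcal{G};\mathbf{H})=\mathcal{I}(\mathbf{Z};\mathbf{H})+\mathcal{I}(\mathbf{A};\mathbf{H})$. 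Under $\mathbf{Z}\perp\mathbf{A}$ the two sides differ by $\mathcal{I}(\mathbf{Z};\mathbf{A}\mid\mathbf{H})\ge 0$, so only your subadditivity inequality is justified, and it is the direction needed.

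The step that fails as written is Step~4.
\begin{itemize}
\item \emph{Tempering.} Your normalizer claim holds only for $\lambda_s\le 1$. For $\lambda_s>1$ (all but one value of the paper's grid $\{1,2,4,6,8,10\}$), the exponent $a=1/\lambda_s<1$ gives $p^{a}\ge p$ and $(1-p)^{a}\ge 1-p$ for edge probability $p$. So each per-pair normalizer $p^{a}+(1-p)^{a}$ lies in $[1,2^{1-a}]$, $\log Z\ge 0$ works against you, and you pay an extra $(1-1/\lambda_s)\log 2$ per modeled pair. Absorbing the separate $1/|\mathcal{E}|$ and $1/|\hat{\mathcal{E}}|$ weights runs into the same issue.
\item \emph{Gaussian constant.} Pinning the coefficient at $1/\lambda_f$ forces $\sigma^2=\lambda_f|\mathcal{V}|/2$. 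This leaves the log-partition term $\tfrac{|\mathcal{V}|d}{2}\log(\pi\lambda_f|\mathcal{V}|)$, which is positive unless $\lambda_f\le 1/(\pi|\mathcal{V}|)$. No entropy convention sets it to zero.
\end{itemize}

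These constants are unavoidable, because with $\mathcal{L}_{\text{feat}},\mathcal{L}_{\text{topo}}$ as defined, the displayed inequality is false.
\begin{itemize}
\item Make every graph-token hidden state the same fixed vector, so $\boldsymbol{s}^{G}$ and $\mathbf{H}$ are constant and the left side is $0$.
\item Take i.i.d.\ $\mathcal{N}(\mathbf{0},\mathbf{I}_d)$ node features and i.i.d.\ Bernoulli$(1/2)$ edges.
\item The best (necessarily constant) decoders give $\mathcal{L}_{\text{feat}}=d$ and $\mathcal{L}_{\text{topo}}=2\log 2$.
\item Meanwhile $\mathcal{H}(\mathbf{Z})+\mathcal{H}(\mathbf{A})=\tfrac{|\mathcal{V}|d}{2}\log(2\pi e)+\binom{|\mathcal{V}|}{2}\log 2$, so the right side is positive once $|\mathcal{V}|$ is moderately large (e.g.\ $|\mathcal{V}|\ge 8$ on the paper's grids).
\end{itemize}

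Your fallback is therefore the correct statement rather than a concession. That fallback is a bound with explicit additive constants, and with coefficients reflecting the per-node and per-pair averaging in $\mathcal{L}_{\text{feat}}$ and $\mathcal{L}_{\text{topo}}$. It is also all the paper's proof actually establishes. The paper reaches the displayed form only by silently dropping the same log-partition and averaging constants. Its factors $1/\lambda_f$ and $1/\lambda_s$ do not even follow from its own identification $\mathcal{L}_{\text{graph}}=\lambda_f\mathbb{E}[-\log q_f]+\lambda_s\mathbb{E}[-\log q_s]$, which gives coefficient $1$.
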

The proof is given in Appendix~\ref{proof:raw_graph_recon}. Proposition~\ref{prop:raw_graph_recon} indicates minimizing the raw graph reconstruction loss maximizes a computable lower bound of $\mathcal{I}(\mathcal{G};\boldsymbol{s}^{G})$, which directly improves the upper bound of optimization objective C-GTMI $\mathcal{I}(\boldsymbol{x}_t;\boldsymbol{s}^{G}\!\mid\! \boldsymbol{s}^{T}_{<t})$.

\subsection{Node Representation Reconstruction in Latent Space}
Unlike reconstructing the raw graph information in the input space, this subsection focuses on reconstructing node representation in the latent space of pre-trained GNNs. Specifically, we first pre-train an LM-based GNN encoder $g$ to obtain fine-grained node representations, and then train a lightweight module, Similarizer or Denoiser, to reconstruct node-level semantics from the $\mathbf{H}$.  


\paragraph{\textbf{LM-based Pre-trained GNN Encoder.}} 
To obtain fine-grained node representations that effectively capture discrete graph structures and multi-level feature information (i.e., node-, edge-, and graph-level), we pretrain an LM-based GNN encoder $g$ within an autoencoder framework~\cite{yangvqgraph,zhou2024unifying}.
Specifically, given a full TAG $\mathcal{G}_{\text{full}} = (\mathcal{V}_{\text{full}}, \mathcal{E}_{\text{full}}, \mathbf{S}_{\text{full}})$, we first encode the textual attributes $\mathbf{S}_{\text{full}}$ using Sentence-BERT~\cite{reimers2019sentence} to extract the initial node features $\mathbf{Z}_{\text{full}}$. Following~\cite{zhou2024unifying,rampavsek2022recipe}, to further enhance structural awareness, we incorporate learnable edge embeddings by treating absent edges as a special edge type, while introducing LapPE and RWSE positional encodings. We adopt the combination of DeeperGCN~\cite{li2020deepergcn} and GAT~\cite{velivckovic2018graph} as the backbone of the encoder. Since the latent space is already powerful, the decoder can be a relatively simple linear layer. In the training stage, we randomly mask node labels of the training set for prediction, which can be regarded as a special case of graph reconstruction. The objective function is defined as follows:
\begin{equation}
\mathcal{L}_{\text{pretrain}} = \mathcal{L}_{\text{mask}} + \mathcal{L}_{\text{reg}},
\end{equation}
where $\mathcal{L}_{\text{mask}}$ denotes the loss for masked label prediction, and $\mathcal{L}_{\text{reg}}$ is a regularization term, implemented as a KL-penalty to push the learned latent representations towards a standard normal distribution~\cite{yangvqgraph}. Once the encoder $g$ is well trained, it is used to encode each subgraph $\mathcal{G}$ and produce the target token representations $\mathbf{E}$, where $g$ corresponds to the reconstruction function $\mathcal{R}$ in Equation~(\ref{eq:reconstrucion}).

\paragraph{\textbf{RGLM-Similarizer.}} As shown in Figure~\ref{fig:overview}(c), RGLM-Similarizer trains a lightweight multilayer perceptron $s$, referred to as the Similarizer, to directly regress the reconstruction token representations $\mathbf{H}$ onto the fine-grained target token representations $\mathbf{E}$. In this setting, $\mathcal{M}$ is defined as the cosine similarity loss, and $\mathcal{J}$ corresponds to the $s$. The objective function is formulated as:

\begin{equation}\label{eq:l_sim}
\mathcal{L}_{\text{graph}}= \lambda_{l}\cdot\mathcal{L}_{\text{sim}}, \quad
\mathcal{L}_{\text{sim}} = 1 - \frac{\langle s(\mathbf{H}), \mathbf{E} \rangle}{\| s(\mathbf{H}) \|_2 \, \| \mathbf{E} \|_2},
\end{equation}
where $\lambda_{l}$ is a scaling factor that balances the reconstruction losses.
\begin{proposition}[Lower Bound via RGLM-Similarizer] \label{prop:latent_sim}
Given the latent node representations $\mathbf{E}$ of a graph $\mathcal{G}$ obtained from a pre-trained GNN $g$,
minimizing the cosine similarity loss $\mathcal{L}_{\text{sim}}$ provides a variational lower bound of $\mathcal{I}(\mathcal{G}; \boldsymbol{s}^G)$:
\begin{equation}
\mathcal{I}(\mathcal{G};\boldsymbol{s}^{G})\ge \mathcal{H}(\mathbf{E})-(\kappa\cdot\lambda_{l}\cdot\mathcal{L}_{\text{sim}}+\mathrm{C}).
\label{eq:sim-mi-lb}
\end{equation}
where $\kappa>0$ and $\mathrm{C}$ is a constant independent of $\boldsymbol{s}^{G}$. The proof is given in Appendix~\ref{proof:latent_sim}.
\end{proposition}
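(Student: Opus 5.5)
The plan is to pass from $\mathcal{G}$ to its latent representation $\mathbf{E}=g(\mathcal{G})$ via the data processing inequality, and then apply a Barber--Agakov-style variational bound whose decoder is built from the Similarizer $s$ and the cosine similarity. Since $\mathbf{E}$ is a deterministic function of $\mathcal{G}$ (the pre-trained encoder $g$ is frozen), we have the Markov chain $\mathbf{E}\leftarrow\mathcal{G}\rightarrow\boldsymbol{s}^G$. Data processing then gives $\mathcal{I}(\mathcal{G};\boldsymbol{s}^G)\ge\mathcal{I}(\mathbf{E};\boldsymbol{s}^G)$. The mean-aggregation producing $\mathbf{H}$ from $\boldsymbol{s}^G$ is also deterministic, so in the same way $\mathcal{I}(\mathbf{E};\boldsymbol{s}^G)\ge\mathcal{I}(\mathbf{E};\mathbf{H})$. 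It therefore suffices to lower-bound $\mathcal{I}(\mathbf{E};\mathbf{H})=\mathcal{H}(\mathbf{E})-\mathcal{H}(\mathbf{E}\mid\mathbf{H})$.

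For the conditional entropy term, I will use the standard variational argument. For any conditional density $q(\mathbf{E}\mid\mathbf{H})$ we have
\begin{equation*}
\mathcal{H}(\mathbf{E}\mid\mathbf{H}) = -\mathbb{E}[\log q(\mathbf{E}\mid\mathbf{H})] - \mathbb{E}\big[\mathrm{KL}(p(\mathbf{E}\mid\mathbf{H})\,\|\,q(\mathbf{E}\mid\mathbf{H}))\big] \le -\mathbb{E}[\log q(\mathbf{E}\mid\mathbf{H})].
\end{equation*}
I then choose $q$ as a von Mises--Fisher-type family on the normalized direction of $\mathbf{E}$:
\begin{equation*}
q(\mathbf{E}\mid\mathbf{H})\propto \exp\!\Big(\kappa'\,\tfrac{\langle s(\mathbf{H}),\mathbf{E}\rangle}{\|s(\mathbf{H})\|_2\|\mathbf{E}\|_2}\Big),
\end{equation*}
with concentration $\kappa'>0$. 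Any factor for the radial part $\|\mathbf{E}\|_2$ is taken independent of $\mathbf{H}$.

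The log-normalizer $\log C_d(\kappa')$ of the vMF family depends only on $\kappa'$ and the dimension, not on $\mathbf{H}$, because the cosine is rotation invariant. Hence
\begin{equation*}
-\log q = \kappa'\,(1-\cos) - \kappa' - \log C_d(\kappa') + \text{(radial term)} = \kappa'\,\mathcal{L}_{\text{sim}} + \mathrm{C}.
\end{equation*}
Writing $\kappa'=\kappa\lambda_l$, with $\kappa>0$ absorbing the scaling, gives $\mathcal{H}(\mathbf{E}\mid\mathbf{H})\le\kappa\lambda_l\mathcal{L}_{\text{sim}}+\mathrm{C}$. Chaining this with the data processing steps yields the claimed bound $\mathcal{I}(\mathcal{G};\boldsymbol{s}^G)\ge\mathcal{H}(\mathbf{E})-(\kappa\lambda_l\mathcal{L}_{\text{sim}}+\mathrm{C})$. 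Here $\mathcal{L}_{\text{sim}}$ is read as the expected loss over the data, which matches how the losses are interpreted in Proposition~\ref{prop:raw_graph_recon}.

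I expect the main obstacle to be making the decoder $q$ a legitimate normalized density whose normalizer does not depend on $\mathbf{H}$. If it did, $\mathrm{C}$ would not be independent of $\boldsymbol{s}^G$. This is where the vMF choice matters, since its normalizer depends only on $\kappa'$ and $d$. The radial part of $\mathbf{E}$ needs separate handling: either treat $\mathbf{E}$ as normalized, which is plausible given the KL regularization toward a standard normal, or fold a fixed radial density into $\mathrm{C}$. A secondary subtlety is that $\mathbf{E}$ is continuous, so $\mathcal{H}$ denotes differential entropy. The variational inequality still holds in that setting, but the data processing steps must be stated for mutual information rather than for entropies.
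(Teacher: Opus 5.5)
Your proposal is correct and follows essentially the same route as the paper's proof. Both use a vMF decoder with concentration $\kappa\lambda_l$ whose normalizer does not depend on $\mathbf{H}$, the Barber--Agakov bound $\mathcal{H}(\mathbf{E}\mid\mathbf{H})\le\mathbb{E}[-\log q_s(\mathbf{E}\mid\mathbf{H})]$, and data processing through the deterministic maps $g$ and mean pooling. The differences are cosmetic: you chain $\mathcal{I}(\mathcal{G};\boldsymbol{s}^G)\ge\mathcal{I}(\mathbf{E};\boldsymbol{s}^G)\ge\mathcal{I}(\mathbf{E};\mathbf{H})$ instead of going through $\mathcal{I}(\mathcal{G};\mathbf{H})$, and you treat the radial part of $\mathbf{E}$ explicitly where the paper simply assumes $\|\mathbf{E}\|=\|s(\mathbf{H})\|=1$.
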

\begin{table*}[tp]
\vskip -0.05in
\caption{Node classification results under supervised settings. Best are bolded, and runner-up are underlined. ``OOM'' indicates out-of-memory, exceeding the A800's 80 GB limit. ``Rank'' represents the overall ranking on all datasets.}\label{tab:main_node}
\vskip -0.15in
\setlength{\extrarowheight}{-0.35pt}
\resizebox{\linewidth}{!}{
\small
\begin{tabular}{c|c|cc|cc|cc|cc|cc}
\hline
\multirow{2}{*}{Method}     & Dataset          & \multicolumn{2}{c|}{Cora}       & \multicolumn{2}{c|}{Pubmed}     & \multicolumn{2}{c|}{OGBN-Arxiv} & \multicolumn{2}{c|}{Reddit}  & \multicolumn{2}{c}{Rank}    \\ \cline{2-12} 
                            & Metric           & Acc            & F1             & Acc            & F1             & Acc            & F1             & Acc            & F1             & Acc            & F1 \\ \hline
\multirow{9}{*}{GNNs}       & GCN              & 89.30          & 87.95          & 89.48          & 89.04          & 73.83          & 54.97          & 63.19          & 62.49     & 5          & 6     \\
                            & GraphSAGE        & 87.82          & 86.77          & 90.54          & 90.51          & 74.47          & 56.41          & 58.51          & 58.41          & 10          & 8 \\
                            & GAT              & 87.08          & 86.32          & 88.11          & 87.60          & 74.39          & 57.71          & 61.78          & 65.38          & 9          & 5 \\ \cline{2-12} 
                            & DIFFormer        & 85.60          & 84.31          & 81.60          & 81.94          & 58.80          & 42.44          & 55.30          & 55.09          & 15          & 14 \\
                            & UniMP            & 87.82          & 86.32          & 90.34          & 90.17          & 70.96          & 53.24          & 59.55          & 59.53          & 11          & 10 \\
                            & NodeFormer       & 87.30          & 85.03          & 89.92          & 89.63          & 68.69          & 50.64          & {\ul 68.06}    & {\ul 67.89}    & 7          & 7 \\ \cline{2-12} 
                            & GraphCL          & 86.53          & 84.50          & 90.39          & 90.28          & OOM            & OOM            & 57.67          & 57.67          & 12          & 12 \\
                            & GRACE            & 89.11          & 88.12          & 86.89          & 86.58          & 72.00          & 50.79          & 65.39          & 65.39          & 8          & 9 \\
                            & LGD              & 85.61          & 84.61          & 90.77          & 90.65          & 72.88          & 50.06          & 65.92          & 62.00          & 6          & 11 \\ \hline
\multirow{1}{*}{GTextLLMs}  & GraphText-ICL             & 82.29          & 81.16          & 56.72          & 52.55          & 68.67          & 51.67          & 56.50          & 52.23          & 16          & 16 \\ \hline
\multirow{5}{*}{GTokenLLMs} & GraphText-SFT         & 84.69          & 83.77          & 87.80          & 87.68          & 49.88                             & 19.46          & 62.20          & 62.07          & 14          & 15  \\
                            & GraphGPT         & 83.92          & 82.72          & 81.76          & 80.99          & 71.59          & 54.58          & 62.11          & 62.39          & 13          & 13 \\
                            & LLaGA            & 88.75          & 87.87          & 90.34          & 90.25          & 74.61          & 56.48          & 66.38          & 66.38          & 4          & 4 \\ \cline{2-12} 
                            & RGLM-Decoder     & 89.85          & \textbf{89.39} & {\ul 91.15}    & {\ul 91.07}    & 75.00          & {\ul 56.92}    & \textbf{68.64} & \textbf{68.54}  & 1          & 1 \\
                            & RGLM-Similarizer & {\ul 89.67}    & 88.96          & \textbf{91.61} & \textbf{91.51} & \textbf{75.14} & \textbf{57.74} & 66.79          & 66.77          & 3          & 2 \\
                            & RGLM-Denoiser    & \textbf{90.22} & {\ul 89.35}    & 90.95          & 90.90          & {\ul 75.10}    & 56.84          & 67.64          & 67.50          & 2          & 3 \\ \hline
\end{tabular}
}
\vskip -0.145in
\end{table*}
\paragraph{\textbf{RGLM-Denoiser.}} 
RGLM-Similarizer is a result-oriented reconstruction, which directly regresses $\mathbf{H}$ onto $\mathbf{E}$. In contrast, RGLM-Denoiser is a process-oriented reconstruction by leveraging a denoising objective. As illustrated in Figure~\ref{fig:overview}(d), RGLM-Denoiser trains a denoiser $f$ that conditions on the reconstruction tokens $\mathbf{H}$ to recover clean latent tokens $\mathbf{E}$ from their noisy tokens $\mathbf{E}_t$. The denoiser $f$ is implemented as a stack of transformer blocks, each equipped with three additional projection layers for the conditions $\mathbf{H}$, the noisy inputs $\mathbf{E}_t$, and the timesteps $t$, respectively. In this setting, $\mathcal{M}$ is defined as the MSE loss between the predicted and true noise, and $\mathcal{J}$ corresponds to the $f$.
The training procedure follows a standard diffusion process:
\begin{equation}
\mathcal{L}_{\text{graph}}= \lambda_{l}\cdot\mathcal{L}_{\text{diff}}, \quad
\mathcal{L}_{\text{diff}} = \mathbb{E}_{t, \boldsymbol{\epsilon}} 
\left[ \left\| f(\mathbf{E}_t, \mathbf{H}, t) - \boldsymbol{\epsilon} \right\|_2^2 \right],
\end{equation}
where $\lambda_{l}$ is a scaling factor that balances the reconstruction losses. $\mathbf{E}_t$ is sampled from $\mathcal{N}(\sqrt{1-\beta_t}\mathbf{E}_{t-1}, \beta_t \mathbf{I})$, 
and $\mathbf{E}_{t-1}$ can be reparameterized directly from $\mathbf{E}$ by letting 
$\bar{\alpha}_t=\prod_{i=1}^t(1-\beta_i)$:
\begin{equation}
\mathbf{E}_t = \sqrt{\bar{\alpha}_t}\mathbf{E} + \sqrt{1-\bar{\alpha}_t}\boldsymbol{\epsilon}, 
\quad \boldsymbol{\epsilon} \sim \mathcal{N}(\mathbf{0}, \mathbf{I}).
\end{equation}
\begin{proposition}[Lower Bound via RGLM-Denoiser]\label{prop:latent_denoise}
Given the latent node representations $\mathbf{E}$ of a graph $\mathcal{G}$ obtained from a pre-trained GNN $g$, minimizing the diffusion noise prediction loss $\mathcal{L}_{\text{diff}}$ provides a variational lower bound of $\mathcal{I}(\mathcal{G}; \boldsymbol{s}^G)$:
\begin{equation}
\mathcal{I}(\mathcal{G};\boldsymbol{s}^{G})
\ \ge\ \mathcal{H}(\mathbf{E})-(\lambda_{l}\cdot\mathcal{L}_{\text{diff}}+\mathrm{C}).
\label{eq:denoise-mi-lb}
\end{equation}
where $\mathrm{C}$ is a constant independent of $\boldsymbol{s}^{G}$. Proof see Appendix~\ref{proof:latent_denoise}.
\end{proposition}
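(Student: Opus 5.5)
The plan is to mirror the variational argument that gives Proposition~\ref{prop:latent_sim}, with the regression decoder replaced by the conditional diffusion model. Since $\mathbf{E}=g(\mathcal{G})$ is a deterministic function of $\mathcal{G}$, the data processing inequality gives $\mathcal{I}(\mathcal{G};\boldsymbol{s}^G)\ge \mathcal{I}(\mathbf{E};\boldsymbol{s}^G)$. The reconstruction tokens $\mathbf{H}$ are a deterministic mean aggregation of $\boldsymbol{s}^G$, so $\mathcal{I}(\mathbf{E};\boldsymbol{s}^G)\ge \mathcal{I}(\mathbf{E};\mathbf{H})$. It therefore suffices to lower bound $\mathcal{I}(\mathbf{E};\mathbf{H})=\mathcal{H}(\mathbf{E})-\mathcal{H}(\mathbf{E}\mid\mathbf{H})$ in terms of $\mathcal{L}_{\text{diff}}$.

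Next I will use the Barber--Agakov variational bound. For any conditional density $q_f(\mathbf{E}\mid\mathbf{H})$ we have $\mathcal{H}(\mathbf{E}\mid\mathbf{H})\le -\mathbb{E}[\log q_f(\mathbf{E}\mid\mathbf{H})]$, because the gap is an expected KL divergence. I will take $q_f$ to be the reverse-process distribution of the conditional diffusion model parameterized by the noise predictor $f(\cdot,\mathbf{H},t)$ in the standard way. The standard DDPM evidence lower bound then gives
\begin{equation}
-\log q_f(\mathbf{E}\mid\mathbf{H})\le \sum_{t} w_t\,\mathbb{E}_{\boldsymbol{\epsilon}}\big\|f(\mathbf{E}_t,\mathbf{H},t)-\boldsymbol{\epsilon}\big\|_2^2+\mathrm{C}_0 .
\end{equation}
Here $w_t=\beta_t^2/(2\sigma_t^2\alpha_t(1-\bar\alpha_t))$, and $\mathrm{C}_0$ collects the prior term $\mathrm{KL}(q(\mathbf{E}_T\mid\mathbf{E})\|\mathcal{N}(0,\mathbf{I}))$ together with the Gaussian normalizers. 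Neither depends on $\boldsymbol{s}^G$, because the forward process and the variances are fixed.

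The main obstacle is turning this weighted sum into the uniformly weighted $\mathcal{L}_{\text{diff}}$ scaled by $\lambda_l$. I would first bound $\sum_t w_t\,\mathbb{E}\|\cdot\|^2\le T\max_t w_t\cdot \mathcal{L}_{\text{diff}}$, where $\mathcal{L}_{\text{diff}}$ averages over $t$ uniformly. I would then absorb the factor $T\max_t w_t$ into the choice of scaling, so that the coefficient reads $\lambda_l$. If the constants do not match exactly, I would state the bound with a positive multiplier $\kappa$, as in Proposition~\ref{prop:latent_sim}, and note that choosing $\lambda_l$ to equal it recovers the displayed form. A secondary point to handle is the decoder's discrete last step $p(\mathbf{E}\mid\mathbf{E}_1)$. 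I would treat it as a Gaussian whose log-density is again an MSE term plus a constant, so it is absorbed the same way.

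Finally, I will chain the inequalities. This gives $\mathcal{I}(\mathcal{G};\boldsymbol{s}^G)\ge\mathcal{H}(\mathbf{E})-\mathcal{H}(\mathbf{E}\mid\mathbf{H})\ge \mathcal{H}(\mathbf{E})-(\lambda_l\mathcal{L}_{\text{diff}}+\mathrm{C})$, with $\mathrm{C}=\mathrm{C}_0$ independent of $\boldsymbol{s}^G$. Here $\mathcal{H}$ is understood as differential entropy, consistent with the continuous $\mathbf{E}$ used in Proposition~\ref{prop:latent_sim}.
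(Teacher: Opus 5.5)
Your proposal is correct and follows the paper's proof essentially step for step. It uses the same data-processing chain $\mathcal{I}(\mathcal{G};\boldsymbol{s}^G)\ge\mathcal{I}(\mathbf{E};\boldsymbol{s}^G)\ge\mathcal{I}(\mathbf{E};\mathbf{H})$, the same Barber--Agakov bound $\mathcal{H}(\mathbf{E}\mid\mathbf{H})\le\mathbb{E}[-\log q_f(\mathbf{E}\mid\mathbf{H})]$, and the same conditional DDPM ELBO rewritten as a weighted noise-prediction MSE plus constants. The only difference is bookkeeping: you bound the weighted sum by $T\max_t w_t$ times the uniformly averaged $\mathcal{L}_{\text{diff}}$ and identify that coefficient with $\lambda_l$ (legitimate whenever $\lambda_l\ge T\max_t w_t$ since $\mathcal{L}_{\text{diff}}\ge 0$, or via a separate multiplier as you note), whereas the paper redefines $\mathcal{L}_{\text{diff}}$ with schedule-dependent weights $\lambda_t$ and simply calls $\lambda_l$ a scaling factor, so your version is, if anything, more explicit about this step.
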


\section{Experiment}

\subsection{Experiment Setup}
\textbf{Datasets.} We conduct experiments on four widely-used datasets: Cora~\cite{yang2016revisiting}, Pubmed~\cite{yang2016revisiting} and OGBN-Arxiv~\cite{hu2020open} from citation networks, as well as Reddit~\cite{huang2024can} from a social network. 
Following~\cite{li2024glbench,chen2024llaga}, we select two widely adopted metrics: accuracy (Acc(↑)) and macro-F1 (F1(↑)).
More dataset details refer to Appendix~\ref{app: exp-dataset}.\\
\textbf{Baselines.}
We compare five categories of representative baselines:
(i) Traditional GNNs, GCN~\cite{DBLP:conf/iclr/KipfW17}, GAT~\cite{velivckovic2018graph} and GraphSAGE~\cite{hamilton2017inductive}; (ii) Graph transformer models, UniMP~\cite{shi2021masked}, DIFFormer~\cite{wu2023difformer} and NodeFormer~\cite{wu2022nodeformer}; (iii) Pretrain-finetune GNNs, GraphCL~\cite{you2020graph}, GRACE~\cite{zhu2020deep} and LGD~\cite{zhou2024unifying}; (iv) The GTextLLMs, GraphText-ICL~\cite{zhaographtext}; 
(v) The GTokenLLMs, GraphText-SFT~\cite{zhaographtext}, GraphGPT~\cite{tang2024graphgpt} and LLaGA~\cite{chen2024llaga}. 
We adopt Sentence-BERT~\cite{reimers2019sentence} to extract node features from the raw textual attributes, and all GTokenLLMs use Vicuna-7B-v1.5-16K~\cite{chiang2023vicuna} as the base LLM.
More baseline introduction and implementation details refer to Appendix~\ref{sec:exp-baseline} and Appendix~\ref{sec:exp-implementation}, respectively.

\subsection{Main Results}
\begin{figure*}[htbp]
     \centering
     \includegraphics[width=0.85\linewidth]{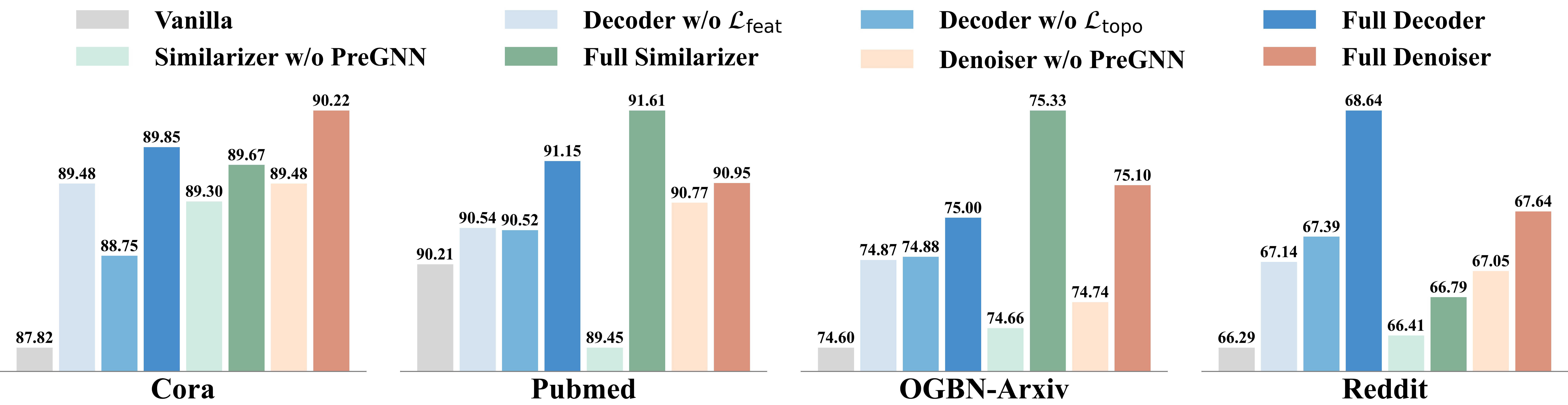}
     \vspace{-10pt}
     \caption{Ablation study on the node classification task across different datasets, where accuracy is reported for each variant.}
     \label{fig: ablation_node}
     \vspace{-5pt}
\end{figure*}
\begin{figure*}[htbp]
     \centering
     \includegraphics[width=0.85\linewidth]{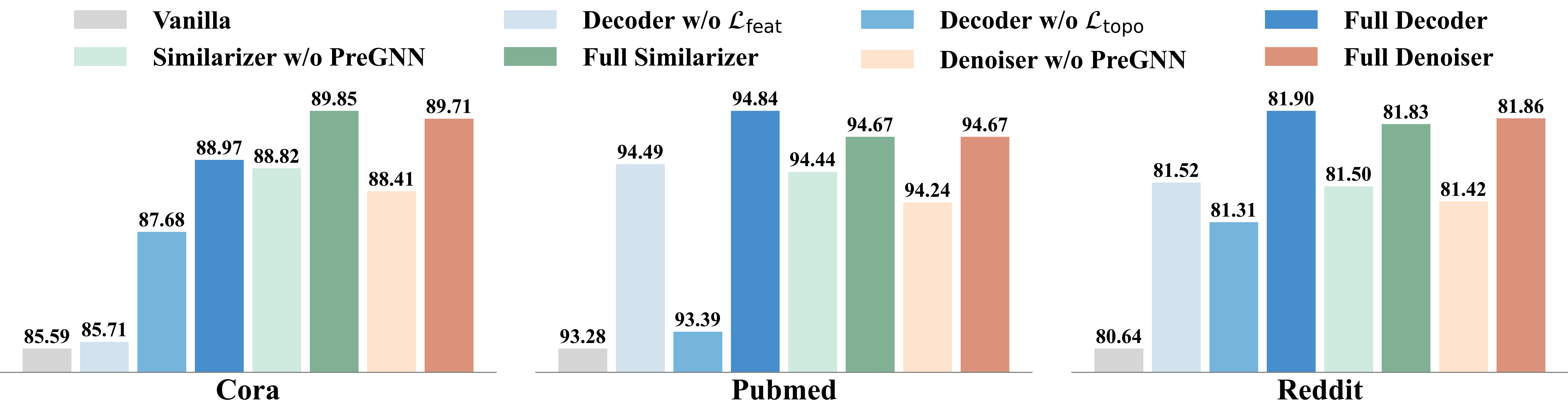}
     \vspace{-10pt}
     \caption{Ablation study on the link prediction task across different datasets, where accuracy is reported for each variant.}
     \label{fig: ablation_link}
     \vspace{-10pt}
\end{figure*}
\subsubsection{\textbf{Performance on Node Classification}} 
We mainly focus on the supervised node classification task. In Table~\ref{tab:main_node}, we find: 
(1) RGLM consistently outperforms all baselines. It highlights RGLM's emergent ability to tackle complex graph tasks beyond traditional GNNs. Compared to existing GTokenLLMs, RGLM introduces explicit graph supervision, which yields more effective alignment, leading to superior performance.
(2) Among the three RGLM variants, Decoder achieves the overall best performance. A possible reason is that reconstructing both the node features and the graph topology in the input space provides more comprehensive supervision for alignment. 
(3) Although we follow the pretraining framework of LGD for learning node latent representations, the performance of Similarizer/Denoiser is not upper-bounded by it. This is because graph reconstruction serves as a regularization term, aiming to enhance the attainable performance ceiling of the model.

\begin{table}[tp]
\caption{Link prediction results under supervised settings. Best are bolded, and runner-up are underlined.}\label{tab:main_link}
\vskip -0.15in
\setlength{\extrarowheight}{1.5pt}
\resizebox{\linewidth}{!}{
\Large
\begin{tabular}{>{\centering\arraybackslash}m{1.4cm}|
                >{\centering\arraybackslash}m{2.65cm}|
                cc|cc|cc}
\hline
\multirow{2}{*}{Method}     & Dataset          & \multicolumn{2}{c|}{Cora}       & \multicolumn{2}{c|}{Pubmed}     & \multicolumn{2}{c}{Reddit}      \\ \cline{2-8} 
                            & Metric           & Acc            & F1             & Acc            & F1             & Acc            & F1             \\ \hline
\multirow{6}{*}{\Large{Baselines}}  
                            & \large{GCN}              & 84.26          & 83.91          & 90.00          & 89.98          & 74.38          & 73.85          \\
                            & \large{UniMP}            & 71.62          & 70.92          & 80.76          & 80.54          & 73.09          & 72.53          \\
                            & \large{GRACE}            & 70.44          & 69.19          & 81.74          & 81.56          & 61.68          & 60.88          \\  
                            & \large{GraphGPT}         & 59.85          & 59.85          & 73.60          & 73.39          & 56.69          & 27.19          \\
                            & \large{LLaGA}            & 82.21          & 79.66          & 91.78          & 91.88          & 79.73          & 76.60          \\ \hline
\multirow{3}{*}{\Large{Ours}}     
                            & \large{RGLM-Decoder}     & 88.97          & 87.96          & \textbf{94.84} & \textbf{94.91} & \textbf{81.90} & {\ul 77.97}    \\
                            & \large{RGLM-Similarizer} & \textbf{89.85} & {\ul 88.82}    & 94.67          & 94.72          & 81.83          & \textbf{78.10} \\
                            & \large{RGLM-Denoiser}    & {\ul 89.71}    & \textbf{88.92} & {\ul 94.67}    & {\ul 94.74}    & {\ul 81.86}    & 77.93          \\ \hline
\end{tabular}
}
\vskip -0.175in
\end{table}

\subsubsection{\textbf{Performance on Link Prediction}}
We also evaluate RGLM on the supervised link prediction task. Following works~\cite{chen2024llaga,tang2024graphgpt}, we randomly select node pairs from the node classification training set for training and from the test set for evaluation, ensuring the edge-level training sets match the size of the node-level. As shown in Table~\ref{tab:main_link}, RGLM consistently outperforms all baselines in the link prediction task. Compared to GraphGPT and LLaGA, which solely leverage text supervision, RGLM explicitly incorporates graph supervision into the alignment process, resulting in an accuracy improvement of 2.1 to 30 points. This highlights the critical role of explicit graph supervision in supporting graph–text alignment.







\begin{table}[tp]
\caption{Node classification results under multi-dataset training. Best are bolded, and runner-up are underlined.}\label{tab:multi-dataset}
\vskip -0.15in
\Large
\setlength{\extrarowheight}{-0.6pt}
\resizebox{\linewidth}{!}{
\begin{tabular}{>{\centering\arraybackslash}m{1.4cm}|
                >{\centering\arraybackslash}m{2.65cm}|
                cc|cc}
\hline
\multirow{3}{*}{Method}     & Training set     & \multicolumn{4}{c}{Cora+Pubmed}                                                        \\ \cline{2-6} 
                            & Test set         & \multicolumn{2}{c|}{Cora}                            & \multicolumn{2}{c}{Pubmed}      \\ \cline{2-6} 
                            & Metric           & Acc            & \multicolumn{1}{c|}{F1}             & Acc            & F1             \\ \hline
\multirow{8}{*}{\Large{Baselines}}       
                            & \large{GCN}              & 85.61          & \multicolumn{1}{c|}{84.16}          & 85.07          & 84.29          \\
                            & \large{GraphSAGE}        & 86.35          & \multicolumn{1}{c|}{86.21}          & 89.27          & 89.12          \\
                            & \large{GAT}              & 83.03          & \multicolumn{1}{c|}{82.29}          & 86.51          & 85.99          \\
                            & \large{UniMP}            & 86.35          & \multicolumn{1}{c|}{85.21}          & 89.20          & 89.14          \\
                            & \large{NodeFormer}       & 87.30          & \multicolumn{1}{c|}{85.44}          & 88.44          & 88.22          \\
                            & \large{GRACE}            & 88.56          & \multicolumn{1}{c|}{87.55}          & 86.21          & 85.92          \\
                            & \large{GraphGPT}         & 81.59          & \multicolumn{1}{c|}{79.83}          & 78.21          & 77.45          \\
                            & \large{LLaGA}            & 88.53          & \multicolumn{1}{c|}{87.28}          & 89.60          & 89.46          \\ \hline
\multirow{3}{*}{\Large{Ours}}     
                            & \large{RGLM-Decoder}     & {\ul 89.67}    & \multicolumn{1}{c|}{{\ul 88.14}}    & {\ul 90.37}    & 90.32          \\
                            & \large{RGLM-Similarizer} & 88.93          & \multicolumn{1}{c|}{87.49}          & \textbf{91.18} & \textbf{91.07} \\
                            & \large{RGLM-Denoiser}    & \textbf{90.04} & \multicolumn{1}{c|}{\textbf{88.94}} & 90.34          & {\ul 90.36}    \\ \hline
\end{tabular}
}
\vskip -0.2in
\end{table}

\subsubsection{\textbf{Multi-dataset Generalization Ability}}
Typically, training a model on multiple datasets may lead to catastrophic forgetting~\cite{kirkpatrick2017overcoming}, resulting in degraded performance. Here, we examine the multi-dataset generalization capability of RGLM across multiple datasets. In our study, the models are first trained on node classification tasks using both the Cora and Pubmed, and then evaluated on each dataset. As reported in Table~\ref{tab:multi-dataset}, RGLM remains robust when trained across different datasets, achieving superior performance and demonstrating its capability in cross-dataset learning.

\subsubsection{\textbf{Zero-Shot Ability}}
We investigate the zero-shot capability of RGLM, where the model is trained on a source dataset and evaluated on a target dataset without any access to the target data during training. 
Following LLaGA~\cite{chen2024llaga}, we train the model on the node description and link prediction tasks with OGBN-Arxiv, and evaluate it on node classification and link prediction using Cora and Pubmed, respectively.
As shown in Figure~\ref{fig:zero_shot_acc}, the three RGLM variants have robust zero-shot capabilities across both tasks. By introducing the additional graph reconstruction tasks, the knowledge can be more effectively transferred across different scenarios.
\begin{figure}[tp]
    \centering
    \includegraphics[width=1.0\linewidth]{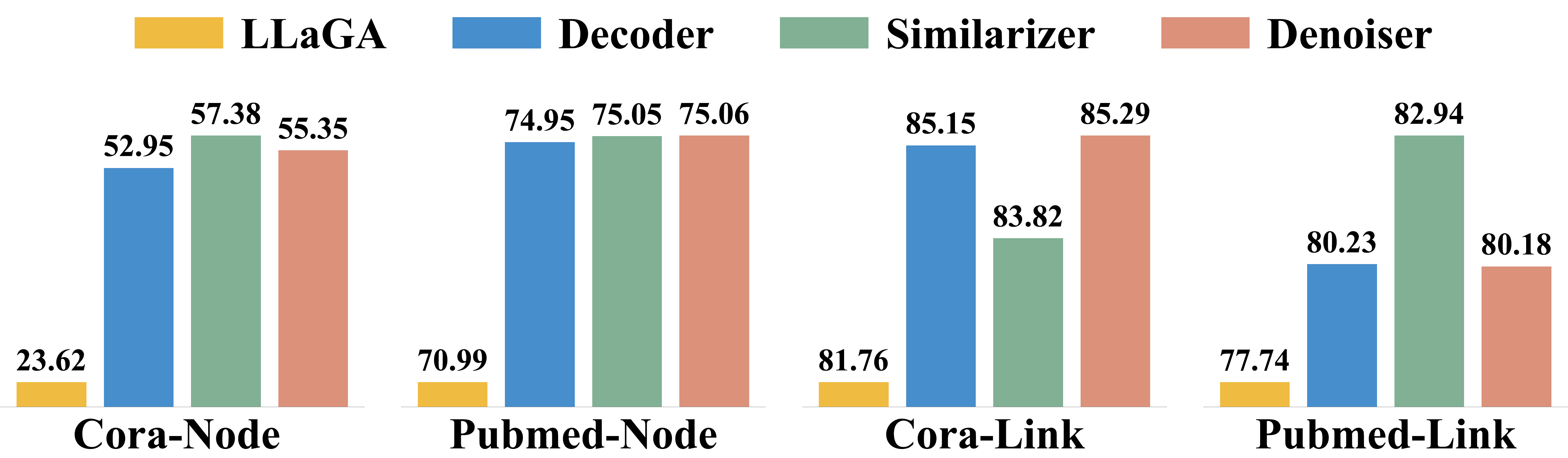}
    \vspace{-20pt}
    \caption{Zero-shot accuracy on node classification and link prediction tasks, where models are trained on the OGBN-Arxiv and evaluated on unseen Cora and Pubmed.}
    \vspace{-15pt}
    \label{fig:zero_shot_acc}
\end{figure}





\subsection{Model Analysis}
\subsubsection{\textbf{Ablation Study}}
We first evaluate the importance of key components through an ablation study. Figures~\ref{fig: ablation_node} and~\ref{fig: ablation_link} present the ablation results of the three RGLM variants on node classification and link prediction, respectively. Specifically, ``Vanilla'' denotes the setting without any reconstruction loss, where LLMs are tuned solely through text supervision. ``w/o $\mathcal{L}_{\text{feat}}$'' and ``w/o $\mathcal{L}_{\text{topo}}$'' correspond to removing the feature decoder and the structure decoder, respectively. ``w/o PreGNN'' refers to removing the pre-trained GNN, which is equivalent to reconstructing the input features while ignoring structural information on the graph. Based on the results, we observe:
(1) Explicitly incorporating graph supervision into the instruction tuning process consistently improves performance compared with pure text supervision, demonstrating the effectiveness of RGLM.
(2) For Decoder, introducing both feature reconstruction and structure reconstruction consistently improves performance, with the best results achieved when both are applied together.
(3) For Similarizer and Denoiser, even without the pre-trained GNN, RGLM still yields performance gains in most cases, which is because they approximately degenerate into input feature reconstruction.
\begin{figure}[tp]
    \centering
    \vspace{-7pt}
    \includegraphics[width=1.0\linewidth]{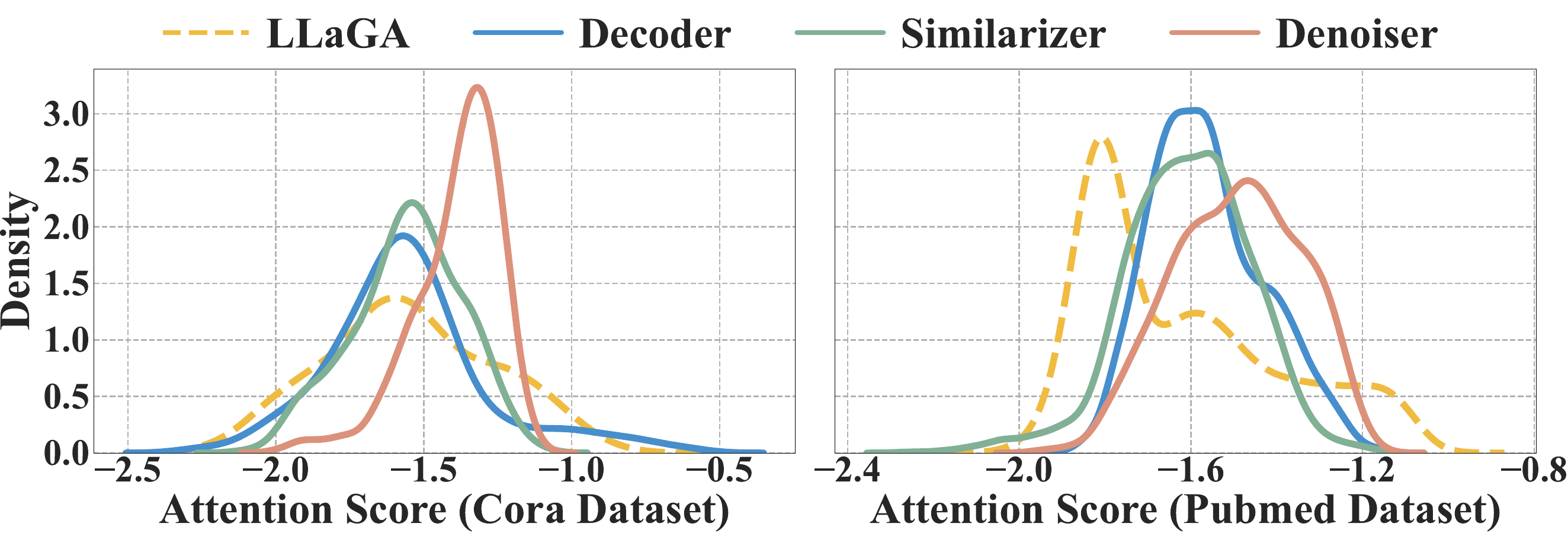}
    \vspace{-20pt}
    \caption{Attention distribution over graph tokens for different methods, where the attention scores are log-scaled.}
    \vspace{5pt}
    \label{fig:attn-dist}
\end{figure}
\subsubsection{\textbf{Attention Analysis}}
We compute the attention scores of the last token with respect to all graph tokens, excluding the placeholder tokens, on the test set of Cora and Pubmed. Figure~\ref{fig:attn-dist} illustrates the attention distribution over graph tokens for LLaGA and RGLM. 
The results reveal that RGLM yields significantly higher attention scores than LLaGA, indicating that incorporating the graph reconstruction objective effectively guides the LLM to focus on input graphs and enhances its ability to comprehend graphs.

\begin{figure}[tp]
    \centering
    \includegraphics[width=1.0\linewidth]{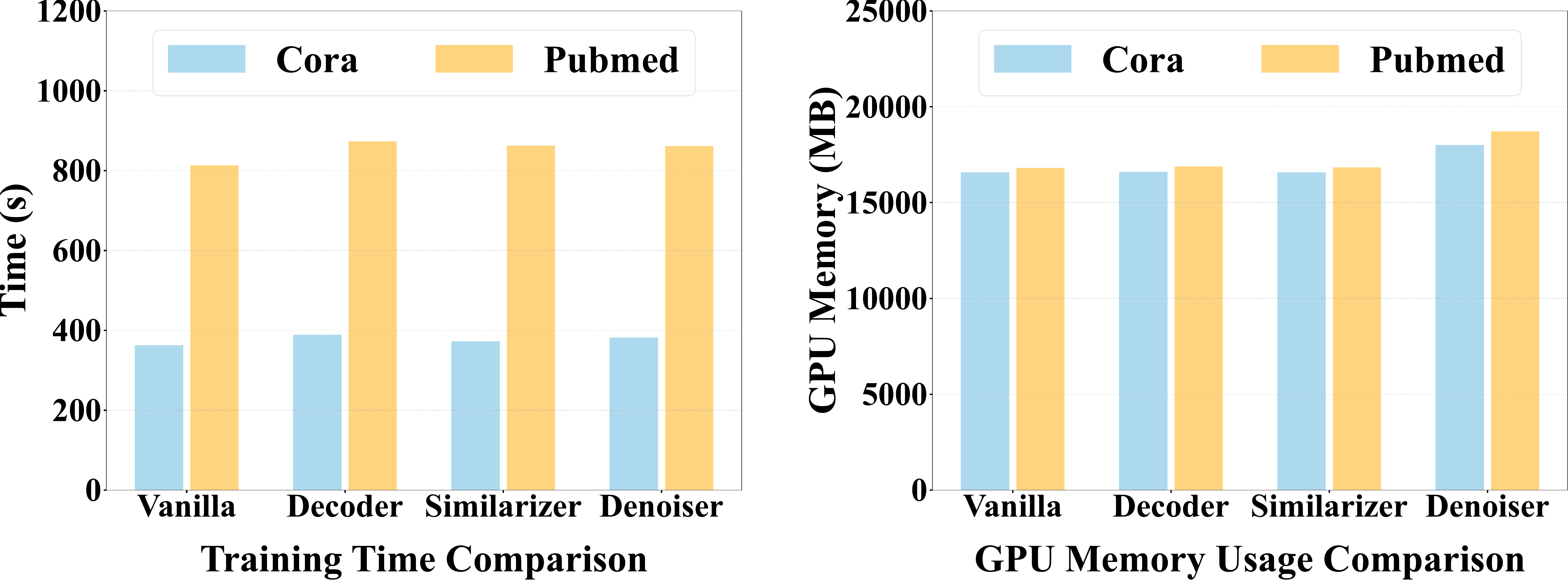}
    \vspace{-18pt}
    \caption{Time complexity (left) and resource cost (right) analysis of different RGLM variants.}
    \vspace{-13pt}
    \label{fig:complexity}
\end{figure}
\subsubsection{\textbf{Compatibility Analysis}}
We separately adopt LLaMA3-8B~\cite{touvron2023llama} and Vicuna-13B~\cite{chiang2023vicuna} as base LLMs to evaluate RGLM's compatibility.
We report the accuracy of various LLMs under two settings: without the reconstruction loss ("Vanilla") and with the proposed RGLM. As shown in Figure~\ref{fig:compatibility}, introducing reconstructive graph loss consistently improves the performance of various LLMs compared with pure text supervision, demonstrating that RGLM is compatible with diverse LLMs and serves as a universal pipeline.

\subsubsection{\textbf{Complexity Analysis}}
We further evaluate the time and space complexity of RGLM. 
Since the graph reconstruction loss does not affect the inference stage of the LLM, here we focus on the instruction tuning phase. Figure~\ref{fig:complexity} reports the training time and GPU memory overhead of the three variants on the Cora and Pubmed datasets. 
The results indicate that the reconstruction process introduces only a negligible increase in training time and resources, while the performance gains clearly outweigh this minor overhead.

\subsubsection{\textbf{Hyper-parameter Analysis}}
Finally, we conduct sensitivity analysis on the reconstruction loss coefficients: $\lambda_f$ and $\lambda_s$ for Decoder, $\lambda_{l}$ for Similarizer and Denoiser. As shown in  Figure~\ref{fig:hyper_s_f_decoder} and Figure~\ref{fig:hyper_l_latent}, RGLM's performance increases at first and then decreases as the reconstruction loss weight grows. This trend stems from the role of reconstruction objectives as regularization terms: when the weight is too small, the explicit graph supervision becomes insufficient, leading to under-utilization of graph information; when the weight is too large, the reconstruction loss dominates training and interferes with language modeling, degrading the performance. 
\begin{figure}[tp]
    \centering
    \includegraphics[width=1.0\linewidth]{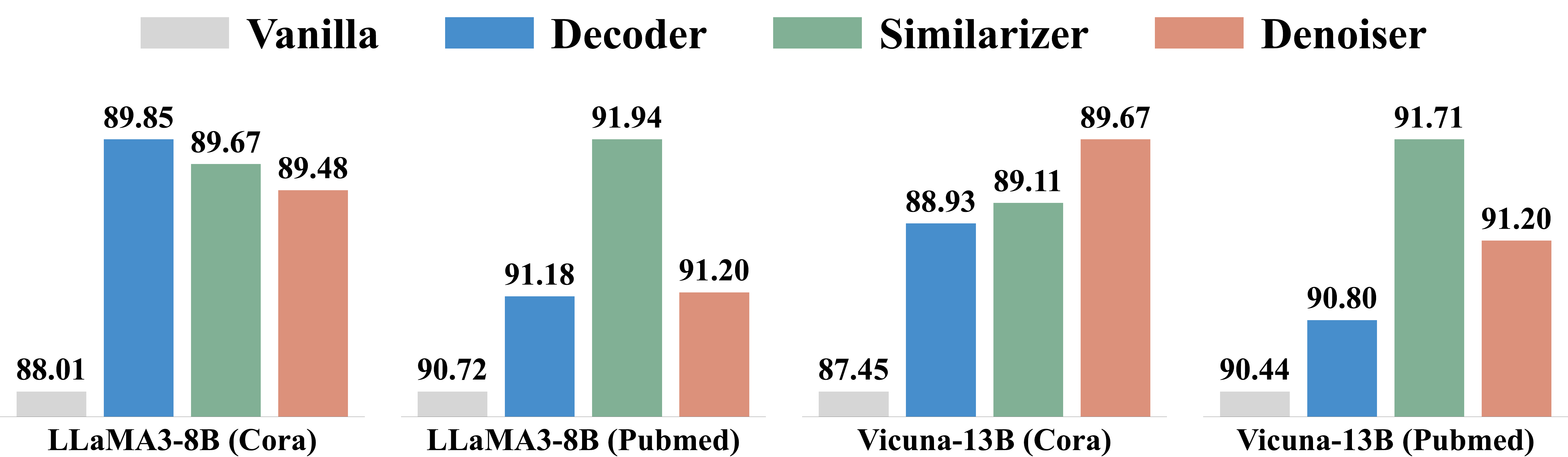}
    \vspace{-15pt}
    \caption{Compatibility analysis of different RGLM variants on base LLMs, where accuracy is reported for each variant.}
    \vspace{5pt}
    \label{fig:compatibility}
\end{figure}

\begin{figure}[tp]
    \centering
    \vspace{-11pt}
    \begin{subfigure}[t]{0.2\textwidth}
        \centering
        \includegraphics[width=\linewidth]{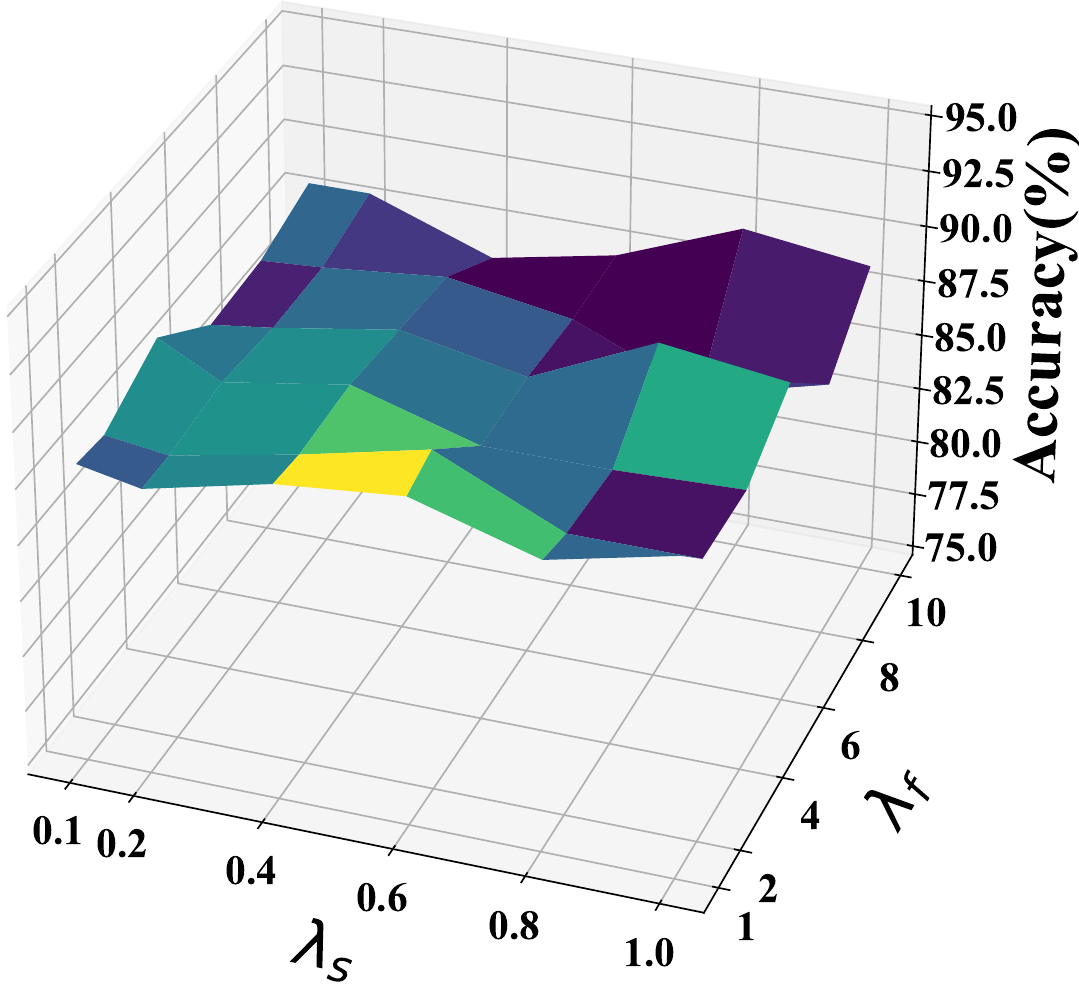}
        \vspace{-16pt}
        \caption{Cora.}
    \end{subfigure}
    \hfill
    \vspace{-11pt}
    \begin{subfigure}[t]{0.2\textwidth}
        \centering
        \includegraphics[width=\linewidth]{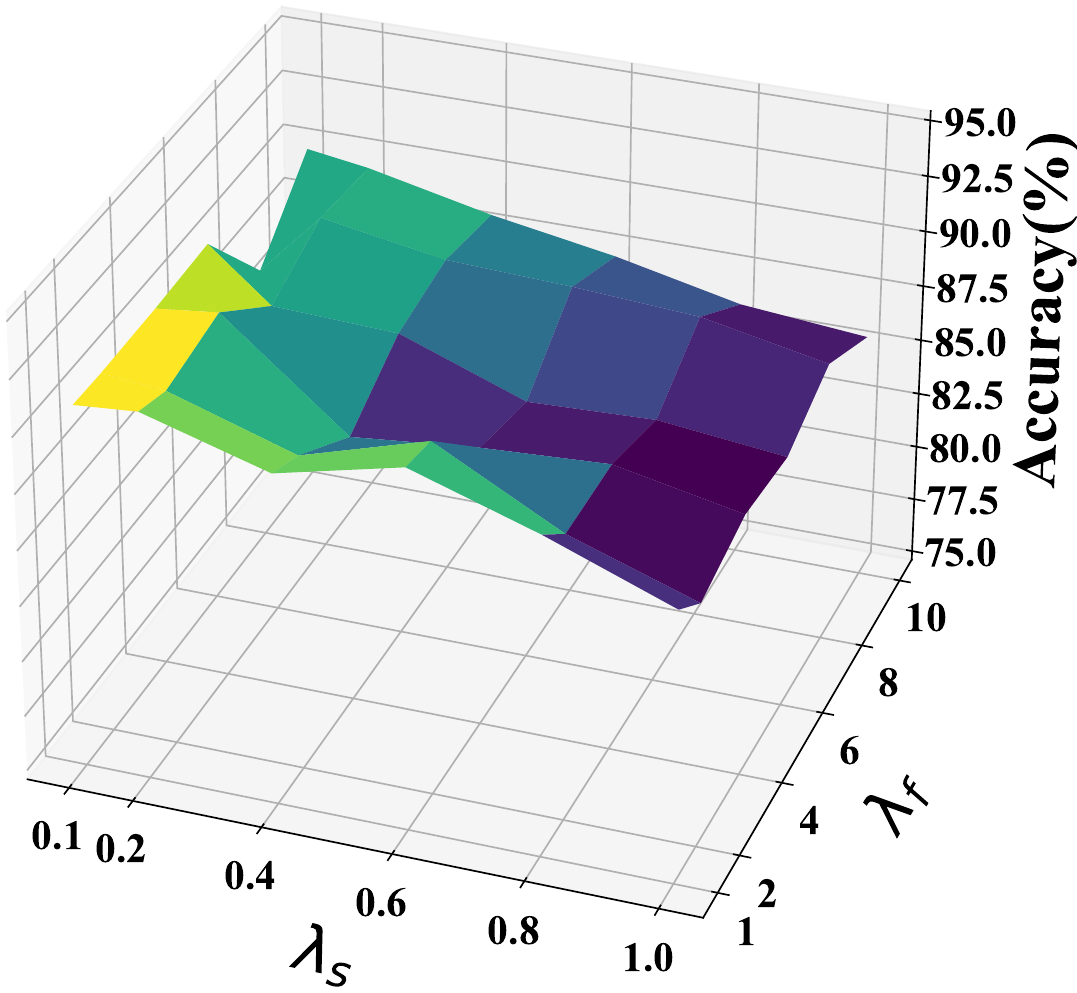}
        \vspace{-16pt}
        \caption{Pubmed.}
    \end{subfigure}
    \caption{Analysis of the hyper-parameter $\lambda_{s}$ and $\lambda_{f}$ in the RGLM-Decoder variant.}
    \vspace{-5pt}
    \label{fig:hyper_s_f_decoder}
\end{figure}

\begin{figure}[tp]
    \centering
    \begin{subfigure}[t]{0.23\textwidth}
        \centering
        \includegraphics[width=\linewidth]{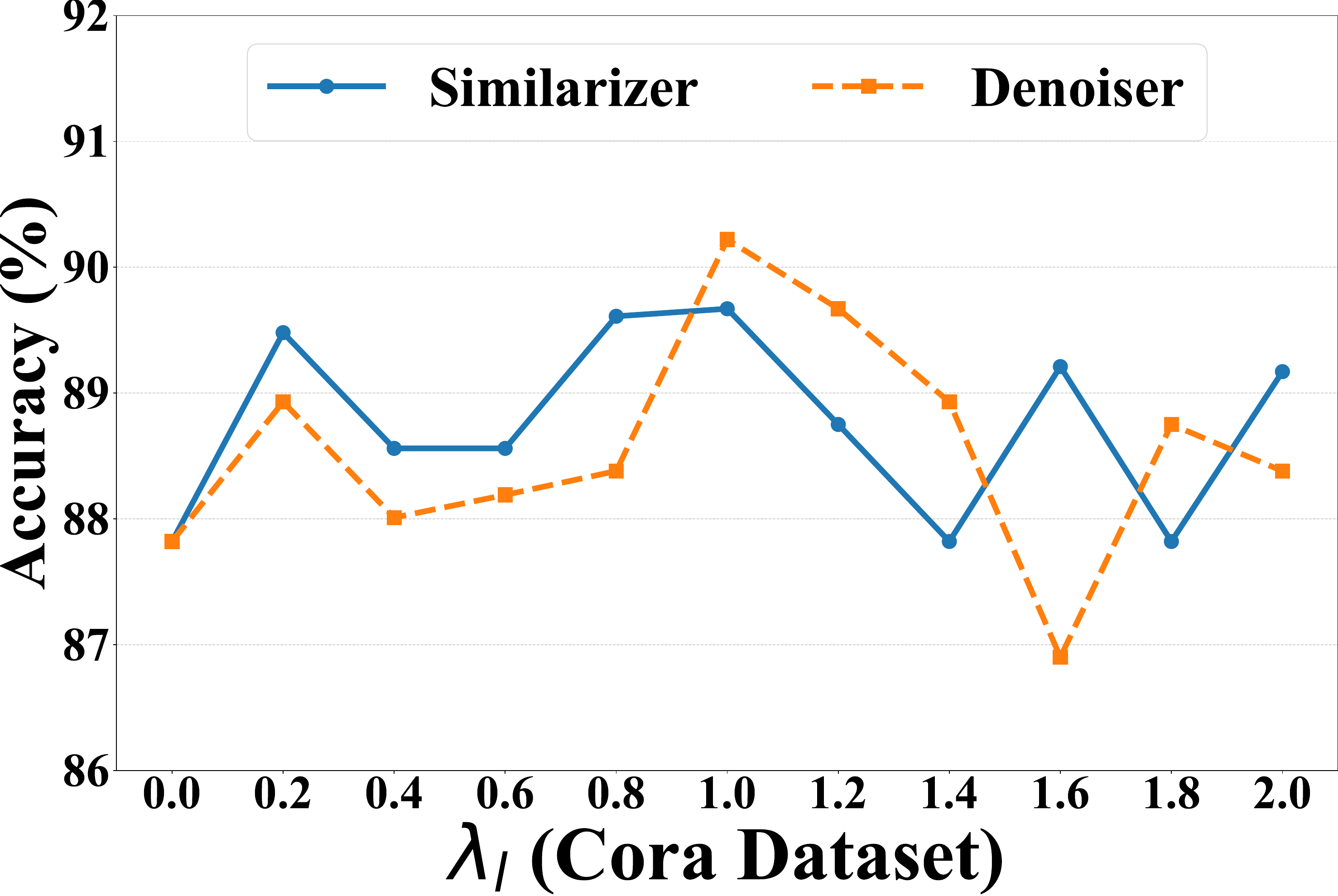}
    \end{subfigure}
    \hfill
    \begin{subfigure}[t]{0.23\textwidth}
        \centering
        \includegraphics[width=\linewidth]{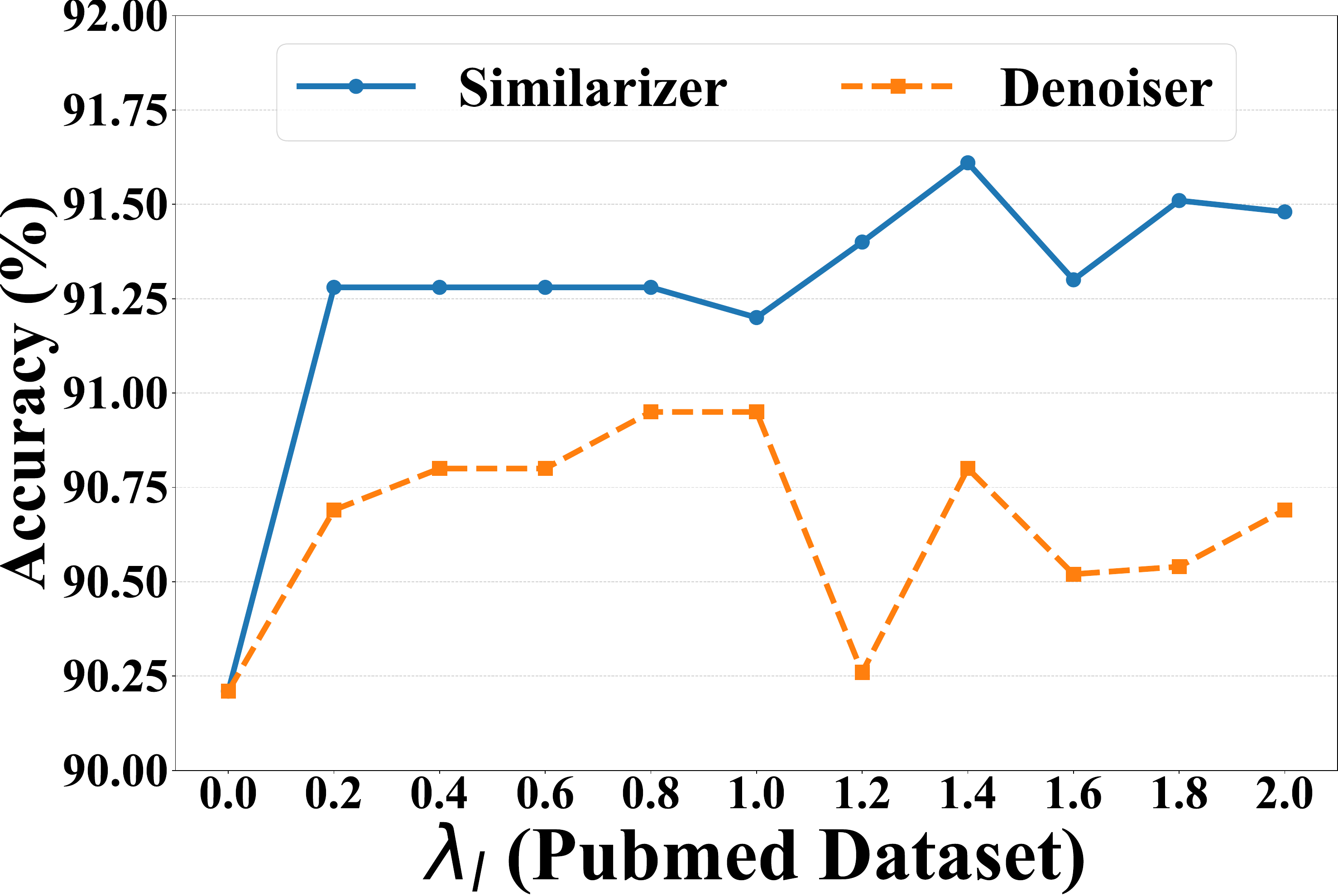}
    \end{subfigure}
    \vspace{-10pt}
    \caption{Analysis of the hyper-parameter $\lambda_{l}$ in the RGLM-Similarizer and RGLM-Denoiser variants.}
    \vspace{-10pt}
    \label{fig:hyper_l_latent}
\end{figure}

\section{Conclusion}
In this work, we theoretically reveal the limitations of existing GTokenLLMs that rely solely on text supervision from language instructions, resulting in implicit graph–text alignment. To overcome it, we prove that the alignment objective is upper-bounded by the mutual information between the input graphs and their hidden representations in the LLM, and further propose a reconstructive graph instruction tuning pipeline, RGLM, to improve this upper bound. The RGLM is embodied through three distinct variants: Decoder from the input space; Similarizer and Denoiser from the latent space. Extensive experiments validate RGLM's effectiveness.

\begin{acks}
This work is supported in part by the National Natural Science Foundation of China (No. 62550138, 62192784, 62572064, 62472329, 62322203, 62172052), and the BUPT Excellent Ph.D. Students Foundation (No. CX20251005).
\end{acks}
\clearpage


\bibliographystyle{ACM-Reference-Format}
\balance
\bibliography{main}


\begin{thebibliography}{62}


\ifx \showCODEN    \undefined \def \showCODEN     #1{\unskip}     \fi
\ifx \showISBNx    \undefined \def \showISBNx     #1{\unskip}     \fi
\ifx \showISBNxiii \undefined \def \showISBNxiii  #1{\unskip}     \fi
\ifx \showISSN     \undefined \def \showISSN      #1{\unskip}     \fi
\ifx \showLCCN     \undefined \def \showLCCN      #1{\unskip}     \fi
\ifx \shownote     \undefined \def \shownote      #1{#1}          \fi
\ifx \showarticletitle \undefined \def \showarticletitle #1{#1}   \fi
\ifx \showURL      \undefined \def \showURL       {\relax}        \fi
\providecommand\bibfield[2]{#2}
\providecommand\bibinfo[2]{#2}
\providecommand\natexlab[1]{#1}
\providecommand\showeprint[2][]{arXiv:#2}

\bibitem[Achiam et~al\mbox{.}(2023)]%
        {achiam2023gpt}
\bibfield{author}{\bibinfo{person}{Josh Achiam}, \bibinfo{person}{Steven Adler}, \bibinfo{person}{Sandhini Agarwal}, {et~al\mbox{.}}} \bibinfo{year}{2023}\natexlab{}.
\newblock \showarticletitle{Gpt-4 technical report}.
\newblock \bibinfo{journal}{\emph{arXiv preprint arXiv:2303.08774}} (\bibinfo{year}{2023}).
\newblock


\bibitem[Chen et~al\mbox{.}(2025)]%
        {chen2025sustainability}
\bibfield{author}{\bibinfo{person}{Jinwen Chen}, \bibinfo{person}{Hao Miao}, \bibinfo{person}{Dazhuo Qiu}, {et~al\mbox{.}}} \bibinfo{year}{2025}\natexlab{}.
\newblock \showarticletitle{Sustainability-Oriented Task Recommendation in Spatial Crowdsourcing}. In \bibinfo{booktitle}{\emph{ICDE}}.
\newblock


\bibitem[Chen et~al\mbox{.}(2024b)]%
        {chen2024llaga}
\bibfield{author}{\bibinfo{person}{Runjin Chen}, \bibinfo{person}{Tong Zhao}, \bibinfo{person}{Ajay~Kumar Jaiswal}, {et~al\mbox{.}}} \bibinfo{year}{2024}\natexlab{b}.
\newblock \showarticletitle{LLaGA: Large Language and Graph Assistant}. In \bibinfo{booktitle}{\emph{International Conference on Machine Learning}}. PMLR.
\newblock


\bibitem[Chen et~al\mbox{.}(2016)]%
        {chen2016infogan}
\bibfield{author}{\bibinfo{person}{Xi Chen}, \bibinfo{person}{Yan Duan}, \bibinfo{person}{Rein Houthooft}, {et~al\mbox{.}}} \bibinfo{year}{2016}\natexlab{}.
\newblock \showarticletitle{Infogan: Interpretable representation learning by information maximizing generative adversarial nets}.
\newblock \bibinfo{journal}{\emph{Advances in neural information processing systems}}  \bibinfo{volume}{29} (\bibinfo{year}{2016}).
\newblock


\bibitem[Chen et~al\mbox{.}(2024a)]%
        {chen2024exploring}
\bibfield{author}{\bibinfo{person}{Zhikai Chen}, \bibinfo{person}{Haitao Mao}, {et~al\mbox{.}}} \bibinfo{year}{2024}\natexlab{a}.
\newblock \showarticletitle{Exploring the potential of large language models (llms) in learning on graphs}.
\newblock \bibinfo{journal}{\emph{ACM SIGKDD Explorations Newsletter}} (\bibinfo{year}{2024}).
\newblock


\bibitem[Chiang et~al\mbox{.}(2023)]%
        {chiang2023vicuna}
\bibfield{author}{\bibinfo{person}{Wei-Lin Chiang}, \bibinfo{person}{Zhuohan Li}, \bibinfo{person}{Ziqing Lin}, {et~al\mbox{.}}} \bibinfo{year}{2023}\natexlab{}.
\newblock \showarticletitle{Vicuna: An open-source chatbot impressing gpt-4 with 90\%* chatgpt quality}.
\newblock \bibinfo{journal}{\emph{See https://vicuna. lmsys. org (accessed 14 April 2023)}} \bibinfo{volume}{2}, \bibinfo{number}{3} (\bibinfo{year}{2023}), \bibinfo{pages}{6}.
\newblock


\bibitem[Devlin et~al\mbox{.}(2019)]%
        {devlin2019bert}
\bibfield{author}{\bibinfo{person}{Jacob Devlin}, \bibinfo{person}{Ming-Wei Chang}, {et~al\mbox{.}}} \bibinfo{year}{2019}\natexlab{}.
\newblock \showarticletitle{BERT: Pre-training of Deep Bidirectional Transformers for Language Understanding}. In \bibinfo{booktitle}{\emph{North American Chapter of the Association for Computational Linguistics}}.
\newblock


\bibitem[Fatemi et~al\mbox{.}(2024)]%
        {fatemitalk}
\bibfield{author}{\bibinfo{person}{Bahare Fatemi}, \bibinfo{person}{Jonathan Halcrow}, {and} \bibinfo{person}{Bryan Perozzi}.} \bibinfo{year}{2024}\natexlab{}.
\newblock \showarticletitle{Talk like a Graph: Encoding Graphs for Large Language Models}. In \bibinfo{booktitle}{\emph{The Twelfth International Conference on Learning Representations}}.
\newblock


\bibitem[Fey and Lenssen(2019)]%
        {fey2019fast}
\bibfield{author}{\bibinfo{person}{Matthias Fey} {and} \bibinfo{person}{Jan~Eric Lenssen}.} \bibinfo{year}{2019}\natexlab{}.
\newblock \showarticletitle{Fast graph representation learning with PyTorch Geometric}.
\newblock \bibinfo{journal}{\emph{arXiv preprint arXiv:1903.02428}} (\bibinfo{year}{2019}).
\newblock


\bibitem[Guo et~al\mbox{.}(2023)]%
        {guo2023gpt4graph}
\bibfield{author}{\bibinfo{person}{Jiayan Guo}, \bibinfo{person}{Lun Du}, \bibinfo{person}{Hengyu Liu}, {et~al\mbox{.}}} \bibinfo{year}{2023}\natexlab{}.
\newblock \showarticletitle{Gpt4graph: Can large language models understand graph structured data? an empirical evaluation and benchmarking}.
\newblock \bibinfo{journal}{\emph{arXiv preprint arXiv:2305.15066}} (\bibinfo{year}{2023}).
\newblock


\bibitem[Guo et~al\mbox{.}(2024)]%
        {guo2024data}
\bibfield{author}{\bibinfo{person}{Yuxin Guo}, \bibinfo{person}{Deyu Bo}, \bibinfo{person}{Cheng Yang}, \bibinfo{person}{Zhiyuan Lu}, \bibinfo{person}{Zhongjian Zhang}, \bibinfo{person}{Jixi Liu}, \bibinfo{person}{Yufei Peng}, {and} \bibinfo{person}{Chuan Shi}.} \bibinfo{year}{2024}\natexlab{}.
\newblock \showarticletitle{Data-centric graph learning: A survey}.
\newblock \bibinfo{journal}{\emph{IEEE Transactions on Big Data}} \bibinfo{volume}{11}, \bibinfo{number}{1} (\bibinfo{year}{2024}), \bibinfo{pages}{1--20}.
\newblock


\bibitem[Guo et~al\mbox{.}({[n.\,d.]})]%
        {guograph}
\bibfield{author}{\bibinfo{person}{Zeyuan Guo}, \bibinfo{person}{Enmao Diao}, \bibinfo{person}{Cheng Yang}, {and} \bibinfo{person}{Chuan Shi}.} \bibinfo{year}{[n.\,d.]}\natexlab{}.
\newblock \showarticletitle{Graph Tokenization for Bridging Graphs and Transformers}. In \bibinfo{booktitle}{\emph{The Fourteenth International Conference on Learning Representations}}.
\newblock


\bibitem[Hamilton et~al\mbox{.}(2017)]%
        {hamilton2017inductive}
\bibfield{author}{\bibinfo{person}{Will Hamilton}, \bibinfo{person}{Zhitao Ying}, {and} \bibinfo{person}{Jure Leskovec}.} \bibinfo{year}{2017}\natexlab{}.
\newblock \showarticletitle{Inductive representation learning on large graphs}.
\newblock \bibinfo{journal}{\emph{Advances in neural information processing systems}} (\bibinfo{year}{2017}).
\newblock


\bibitem[He et~al\mbox{.}(2025)]%
        {he2025unigraph}
\bibfield{author}{\bibinfo{person}{Yufei He}, \bibinfo{person}{Yuan Sui}, \bibinfo{person}{Xiaoxin He}, {et~al\mbox{.}}} \bibinfo{year}{2025}\natexlab{}.
\newblock \showarticletitle{Unigraph: Learning a unified cross-domain foundation model for text-attributed graphs}. In \bibinfo{booktitle}{\emph{Proceedings of the 31st ACM SIGKDD Conference on Knowledge Discovery and Data Mining V. 1}}.
\newblock


\bibitem[Hjelm et~al\mbox{.}(2019)]%
        {hjelmlearning}
\bibfield{author}{\bibinfo{person}{R~Devon Hjelm}, \bibinfo{person}{Alex Fedorov}, \bibinfo{person}{Samuel Lavoie-Marchildon}, {et~al\mbox{.}}} \bibinfo{year}{2019}\natexlab{}.
\newblock \showarticletitle{Learning deep representations by mutual information estimation and maximization}. In \bibinfo{booktitle}{\emph{International Conference on Learning Representations}}.
\newblock


\bibitem[Hu et~al\mbox{.}(2022)]%
        {hu2022lora}
\bibfield{author}{\bibinfo{person}{Edward~J Hu}, \bibinfo{person}{Yelong Shen}, \bibinfo{person}{Phillip Wallis}, {et~al\mbox{.}}} \bibinfo{year}{2022}\natexlab{}.
\newblock \showarticletitle{Lora: Low-rank adaptation of large language models.}
\newblock \bibinfo{journal}{\emph{ICLR}} \bibinfo{volume}{1}, \bibinfo{number}{2} (\bibinfo{year}{2022}), \bibinfo{pages}{3}.
\newblock


\bibitem[Hu et~al\mbox{.}(2020)]%
        {hu2020open}
\bibfield{author}{\bibinfo{person}{Weihua Hu}, \bibinfo{person}{Matthias Fey}, \bibinfo{person}{Marinka Zitnik}, {et~al\mbox{.}}} \bibinfo{year}{2020}\natexlab{}.
\newblock \showarticletitle{Open graph benchmark: Datasets for machine learning on graphs}.
\newblock \bibinfo{journal}{\emph{Advances in neural information processing systems}} (\bibinfo{year}{2020}).
\newblock


\bibitem[Huang et~al\mbox{.}(2024)]%
        {huang2024can}
\bibfield{author}{\bibinfo{person}{Xuanwen Huang}, \bibinfo{person}{Kaiqiao Han}, \bibinfo{person}{Yang Yang}, {et~al\mbox{.}}} \bibinfo{year}{2024}\natexlab{}.
\newblock \showarticletitle{Can gnn be good adapter for llms?}. In \bibinfo{booktitle}{\emph{Proceedings of the ACM Web Conference 2024}}.
\newblock


\bibitem[Jin et~al\mbox{.}(2024)]%
        {jin2024large}
\bibfield{author}{\bibinfo{person}{Bowen Jin}, \bibinfo{person}{Gang Liu}, {et~al\mbox{.}}} \bibinfo{year}{2024}\natexlab{}.
\newblock \showarticletitle{Large language models on graphs: A comprehensive survey}.
\newblock \bibinfo{journal}{\emph{IEEE Transactions on Knowledge and Data Engineering}} (\bibinfo{year}{2024}).
\newblock


\bibitem[Kipf and Welling(2017)]%
        {DBLP:conf/iclr/KipfW17}
\bibfield{author}{\bibinfo{person}{Thomas~N. Kipf} {and} \bibinfo{person}{Max Welling}.} \bibinfo{year}{2017}\natexlab{}.
\newblock \showarticletitle{Semi-Supervised Classification with Graph Convolutional Networks}. In \bibinfo{booktitle}{\emph{International Conference on Learning Representations}}.
\newblock


\bibitem[Kirkpatrick et~al\mbox{.}(2017)]%
        {kirkpatrick2017overcoming}
\bibfield{author}{\bibinfo{person}{James Kirkpatrick}, \bibinfo{person}{Razvan Pascanu}, \bibinfo{person}{Neil Rabinowitz}, {et~al\mbox{.}}} \bibinfo{year}{2017}\natexlab{}.
\newblock \showarticletitle{Overcoming catastrophic forgetting in neural networks}.
\newblock \bibinfo{journal}{\emph{Proceedings of the national academy of sciences}} (\bibinfo{year}{2017}).
\newblock


\bibitem[Kong et~al\mbox{.}(2025)]%
        {konggofa}
\bibfield{author}{\bibinfo{person}{Lecheng Kong}, \bibinfo{person}{Jiarui Feng}, \bibinfo{person}{Hao Liu}, {et~al\mbox{.}}} \bibinfo{year}{2025}\natexlab{}.
\newblock \showarticletitle{GOFA: A Generative One-For-All Model for Joint Graph Language Modeling}. In \bibinfo{booktitle}{\emph{The Thirteenth International Conference on Learning Representations}}.
\newblock


\bibitem[Li et~al\mbox{.}(2020)]%
        {li2020deepergcn}
\bibfield{author}{\bibinfo{person}{Guohao Li}, \bibinfo{person}{Chenxin Xiong}, \bibinfo{person}{Ali Thabet}, {et~al\mbox{.}}} \bibinfo{year}{2020}\natexlab{}.
\newblock \showarticletitle{Deepergcn: All you need to train deeper gcns}.
\newblock \bibinfo{journal}{\emph{arXiv preprint arXiv:2006.07739}} (\bibinfo{year}{2020}).
\newblock


\bibitem[Li et~al\mbox{.}(2024)]%
        {li2024glbench}
\bibfield{author}{\bibinfo{person}{Yuhan Li}, \bibinfo{person}{Peisong Wang}, \bibinfo{person}{Xiao Zhu}, {et~al\mbox{.}}} \bibinfo{year}{2024}\natexlab{}.
\newblock \showarticletitle{Glbench: A comprehensive benchmark for graph with large language models}.
\newblock \bibinfo{journal}{\emph{Advances in Neural Information Processing Systems}}  \bibinfo{volume}{37} (\bibinfo{year}{2024}), \bibinfo{pages}{42349--42368}.
\newblock


\bibitem[Liu et~al\mbox{.}(2025b)]%
        {liu2025graph}
\bibfield{author}{\bibinfo{person}{Jiawei Liu}, \bibinfo{person}{Cheng Yang}, \bibinfo{person}{Zhiyuan Lu}, {et~al\mbox{.}}} \bibinfo{year}{2025}\natexlab{b}.
\newblock \showarticletitle{Graph foundation models: Concepts, opportunities and challenges}.
\newblock \bibinfo{journal}{\emph{IEEE Transactions on Pattern Analysis and Machine Intelligence}} (\bibinfo{year}{2025}).
\newblock


\bibitem[Liu et~al\mbox{.}(2025a)]%
        {liu2025spottrip}
\bibfield{author}{\bibinfo{person}{Yinghui Liu}, \bibinfo{person}{Hao Miao}, \bibinfo{person}{Guojiang Shen}, {et~al\mbox{.}}} \bibinfo{year}{2025}\natexlab{a}.
\newblock \showarticletitle{{SPOT-Trip: Dual-Preference Driven Out-of-Town Trip Recommendation}}. In \bibinfo{booktitle}{\emph{{NeurIPS}}}.
\newblock


\bibitem[Luo et~al\mbox{.}(2025)]%
        {luo2025fairgp}
\bibfield{author}{\bibinfo{person}{Renqiang Luo}, \bibinfo{person}{Huafei Huang}, \bibinfo{person}{Ivan Lee}, {et~al\mbox{.}}} \bibinfo{year}{2025}\natexlab{}.
\newblock \showarticletitle{Fair{GP}: A Scalable and Fair Graph Transformer Using Graph Partitioning}. In \bibinfo{booktitle}{\emph{Proceedings of the 39th Annual AAAI Conference on Artificial Intelligence}}.
\newblock


\bibitem[Luo et~al\mbox{.}(2024)]%
        {luo2024fairgt}
\bibfield{author}{\bibinfo{person}{Renqiang Luo}, \bibinfo{person}{Huafei Huang}, \bibinfo{person}{Shuo Yu}, {et~al\mbox{.}}} \bibinfo{year}{2024}\natexlab{}.
\newblock \showarticletitle{Fair{GT}: A Fairness-aware Graph Transformer}. In \bibinfo{booktitle}{\emph{Proceedings of the 32nd International Joint Conference on Artificial Intelligence}}.
\newblock


\bibitem[Ramp{\'a}{\v{s}}ek et~al\mbox{.}(2022)]%
        {rampavsek2022recipe}
\bibfield{author}{\bibinfo{person}{Ladislav Ramp{\'a}{\v{s}}ek}, \bibinfo{person}{Michael Galkin}, {et~al\mbox{.}}} \bibinfo{year}{2022}\natexlab{}.
\newblock \showarticletitle{Recipe for a general, powerful, scalable graph transformer}.
\newblock \bibinfo{journal}{\emph{Advances in Neural Information Processing Systems}} (\bibinfo{year}{2022}).
\newblock


\bibitem[Reimers and Gurevych(2019)]%
        {reimers2019sentence}
\bibfield{author}{\bibinfo{person}{Nils Reimers} {and} \bibinfo{person}{Iryna Gurevych}.} \bibinfo{year}{2019}\natexlab{}.
\newblock \showarticletitle{Sentence-BERT: Sentence Embeddings using Siamese BERT-Networks}. In \bibinfo{booktitle}{\emph{Proceedings of the 2019 Conference on Empirical Methods in Natural Language Processing and the 9th International Joint Conference on Natural Language Processing (EMNLP-IJCNLP)}}.
\newblock


\bibitem[Shi et~al\mbox{.}(2021)]%
        {shi2021masked}
\bibfield{author}{\bibinfo{person}{Yunsheng Shi}, \bibinfo{person}{Zhengjie Huang}, \bibinfo{person}{Shikun Feng}, {et~al\mbox{.}}} \bibinfo{year}{2021}\natexlab{}.
\newblock \showarticletitle{Masked Label Prediction: Unified Message Passing Model for Semi-Supervised Classification}. International Joint Conferences on Artificial Intelligence Organization.
\newblock


\bibitem[Tang et~al\mbox{.}(2024)]%
        {tang2024graphgpt}
\bibfield{author}{\bibinfo{person}{Jiabin Tang}, \bibinfo{person}{Yuhao Yang}, \bibinfo{person}{Wei Wei}, {et~al\mbox{.}}} \bibinfo{year}{2024}\natexlab{}.
\newblock \showarticletitle{Graphgpt: Graph instruction tuning for large language models}. In \bibinfo{booktitle}{\emph{Proceedings of the 47th International ACM SIGIR Conference on Research and Development in Information Retrieval}}.
\newblock


\bibitem[Touvron et~al\mbox{.}(2023)]%
        {touvron2023llama}
\bibfield{author}{\bibinfo{person}{Hugo Touvron}, \bibinfo{person}{Thibaut Lavril}, \bibinfo{person}{Gautier Izacard}, {et~al\mbox{.}}} \bibinfo{year}{2023}\natexlab{}.
\newblock \showarticletitle{Llama: Open and efficient foundation language models}.
\newblock \bibinfo{journal}{\emph{arXiv preprint arXiv:2302.13971}} (\bibinfo{year}{2023}).
\newblock


\bibitem[Vaswani et~al\mbox{.}(2017)]%
        {vaswani2017attention}
\bibfield{author}{\bibinfo{person}{Ashish Vaswani}, \bibinfo{person}{Noam Shazeer}, \bibinfo{person}{Niki Parmar}, {et~al\mbox{.}}} \bibinfo{year}{2017}\natexlab{}.
\newblock \showarticletitle{Attention is all you need}.
\newblock \bibinfo{journal}{\emph{Advances in neural information processing systems}} (\bibinfo{year}{2017}).
\newblock


\bibitem[Veli{\v{c}}kovi{\'c} et~al\mbox{.}(2018)]%
        {velivckovic2018graph}
\bibfield{author}{\bibinfo{person}{Petar Veli{\v{c}}kovi{\'c}}, \bibinfo{person}{Guillem Cucurull}, \bibinfo{person}{Arantxa Casanova}, {et~al\mbox{.}}} \bibinfo{year}{2018}\natexlab{}.
\newblock \showarticletitle{Graph Attention Networks}. In \bibinfo{booktitle}{\emph{International Conference on Learning Representations}}.
\newblock


\bibitem[Wang et~al\mbox{.}(2024b)]%
        {wang2024llms}
\bibfield{author}{\bibinfo{person}{Duo Wang}, \bibinfo{person}{Yuan Zuo}, \bibinfo{person}{Fengzhi Li}, {et~al\mbox{.}}} \bibinfo{year}{2024}\natexlab{b}.
\newblock \showarticletitle{Llms as zero-shot graph learners: Alignment of gnn representations with llm token embeddings}.
\newblock \bibinfo{journal}{\emph{Advances in Neural Information Processing Systems}}  \bibinfo{volume}{37} (\bibinfo{year}{2024}), \bibinfo{pages}{5950--5973}.
\newblock


\bibitem[Wang et~al\mbox{.}(2023)]%
        {wang2023can}
\bibfield{author}{\bibinfo{person}{Heng Wang}, \bibinfo{person}{Shangbin Feng}, \bibinfo{person}{Tianxing He}, {et~al\mbox{.}}} \bibinfo{year}{2023}\natexlab{}.
\newblock \showarticletitle{Can language models solve graph problems in natural language?}
\newblock \bibinfo{journal}{\emph{Advances in Neural Information Processing Systems}}  \bibinfo{volume}{36} (\bibinfo{year}{2023}), \bibinfo{pages}{30840--30861}.
\newblock


\bibitem[Wang et~al\mbox{.}(2025b)]%
        {wang2025ross3d}
\bibfield{author}{\bibinfo{person}{Haochen Wang}, \bibinfo{person}{Yucheng Zhao}, \bibinfo{person}{Tiancai Wang}, \bibinfo{person}{Haoqiang Fan}, \bibinfo{person}{Xiangyu Zhang}, {and} \bibinfo{person}{Zhaoxiang Zhang}.} \bibinfo{year}{2025}\natexlab{b}.
\newblock \showarticletitle{Ross3d: Reconstructive visual instruction tuning with 3d-awareness}. In \bibinfo{booktitle}{\emph{Proceedings of the IEEE/CVF International Conference on Computer Vision}}. \bibinfo{pages}{9275--9286}.
\newblock


\bibitem[Wang et~al\mbox{.}({[n.\,d.]})]%
        {wangreconstructive}
\bibfield{author}{\bibinfo{person}{Haochen Wang}, \bibinfo{person}{Anlin Zheng}, \bibinfo{person}{Yucheng Zhao}, \bibinfo{person}{Tiancai Wang}, \bibinfo{person}{Zheng Ge}, \bibinfo{person}{Xiangyu Zhang}, {and} \bibinfo{person}{Zhaoxiang Zhang}.} \bibinfo{year}{[n.\,d.]}\natexlab{}.
\newblock \showarticletitle{Reconstructive Visual Instruction Tuning}. In \bibinfo{booktitle}{\emph{The Thirteenth International Conference on Learning Representations}}.
\newblock


\bibitem[Wang et~al\mbox{.}(2025a)]%
        {wanggeneralization}
\bibfield{author}{\bibinfo{person}{Haoyu~Peter Wang}, \bibinfo{person}{Shikun Liu}, \bibinfo{person}{Rongzhe Wei}, {et~al\mbox{.}}} \bibinfo{year}{2025}\natexlab{a}.
\newblock \showarticletitle{Generalization Principles for Inference over Text-Attributed Graphs with Large Language Models}. In \bibinfo{booktitle}{\emph{Forty-second International Conference on Machine Learning}}.
\newblock


\bibitem[Wang et~al\mbox{.}(2024a)]%
        {wang2024instructgraph}
\bibfield{author}{\bibinfo{person}{Jianing Wang}, \bibinfo{person}{Junda Wu}, \bibinfo{person}{Yupeng Hou}, {et~al\mbox{.}}} \bibinfo{year}{2024}\natexlab{a}.
\newblock \showarticletitle{Instructgraph: Boosting large language models via graph-centric instruction tuning and preference alignment}.
\newblock \bibinfo{journal}{\emph{arXiv preprint arXiv:2402.08785}} (\bibinfo{year}{2024}).
\newblock


\bibitem[Wu et~al\mbox{.}(2022a)]%
        {wu2022semi}
\bibfield{author}{\bibinfo{person}{Fei Wu}, \bibinfo{person}{Xiao-Yuan Jing}, \bibinfo{person}{Pengfei Wei}, {et~al\mbox{.}}} \bibinfo{year}{2022}\natexlab{a}.
\newblock \showarticletitle{Semi-supervised multi-view graph convolutional networks with application to webpage classification}.
\newblock \bibinfo{journal}{\emph{Information Sciences}}  \bibinfo{volume}{591} (\bibinfo{year}{2022}), \bibinfo{pages}{142--154}.
\newblock


\bibitem[Wu et~al\mbox{.}(2023)]%
        {wu2023difformer}
\bibfield{author}{\bibinfo{person}{Qitian Wu}, \bibinfo{person}{Chenxiao Yang}, \bibinfo{person}{Wentao Zhao}, {et~al\mbox{.}}} \bibinfo{year}{2023}\natexlab{}.
\newblock \showarticletitle{Difformer: Scalable (graph) transformers induced by energy constrained diffusion}.
\newblock \bibinfo{journal}{\emph{arXiv preprint arXiv:2301.09474}} (\bibinfo{year}{2023}).
\newblock


\bibitem[Wu et~al\mbox{.}(2022b)]%
        {wu2022nodeformer}
\bibfield{author}{\bibinfo{person}{Qitian Wu}, \bibinfo{person}{Wentao Zhao}, \bibinfo{person}{Zenan Li}, {et~al\mbox{.}}} \bibinfo{year}{2022}\natexlab{b}.
\newblock \showarticletitle{Nodeformer: A scalable graph structure learning transformer for node classification}.
\newblock \bibinfo{journal}{\emph{Advances in Neural Information Processing Systems}} (\bibinfo{year}{2022}).
\newblock


\bibitem[Wu et~al\mbox{.}(2020)]%
        {wu2020comprehensive}
\bibfield{author}{\bibinfo{person}{Zonghan Wu}, \bibinfo{person}{Shirui Pan}, \bibinfo{person}{Fengwen Chen}, {et~al\mbox{.}}} \bibinfo{year}{2020}\natexlab{}.
\newblock \showarticletitle{A comprehensive survey on graph neural networks}.
\newblock \bibinfo{journal}{\emph{IEEE transactions on neural networks and learning systems}} (\bibinfo{year}{2020}).
\newblock


\bibitem[Yan et~al\mbox{.}(2024)]%
        {yan2024federated}
\bibfield{author}{\bibinfo{person}{Bo Yan}, \bibinfo{person}{Yang Cao}, \bibinfo{person}{Haoyu Wang}, {et~al\mbox{.}}} \bibinfo{year}{2024}\natexlab{}.
\newblock \showarticletitle{Federated heterogeneous graph neural network for privacy-preserving recommendation}. In \bibinfo{booktitle}{\emph{Proceedings of the ACM Web Conference 2024}}.
\newblock


\bibitem[Yan et~al\mbox{.}(2025a)]%
        {yan2025federated}
\bibfield{author}{\bibinfo{person}{Bo Yan}, \bibinfo{person}{Sihao He}, \bibinfo{person}{Cheng Yang}, {et~al\mbox{.}}} \bibinfo{year}{2025}\natexlab{a}.
\newblock \showarticletitle{Federated graph condensation with information bottleneck principles}. In \bibinfo{booktitle}{\emph{Proceedings of the AAAI Conference on Artificial Intelligence}}.
\newblock


\bibitem[Yan et~al\mbox{.}(2025b)]%
        {yan2025data}
\bibfield{author}{\bibinfo{person}{Bo Yan}, \bibinfo{person}{Zhongjian Zhang}, \bibinfo{person}{Huabin Sun}, \bibinfo{person}{Mengmei Zhang}, \bibinfo{person}{Yang Cao}, {and} \bibinfo{person}{Chuan Shi}.} \bibinfo{year}{2025}\natexlab{b}.
\newblock \showarticletitle{Data-centric federated graph learning with large language models}.
\newblock \bibinfo{journal}{\emph{arXiv preprint arXiv:2503.19455}} (\bibinfo{year}{2025}).
\newblock


\bibitem[Yang et~al\mbox{.}(2025)]%
        {yang2025harnessing}
\bibfield{author}{\bibinfo{person}{Jinyu Yang}, \bibinfo{person}{Ruijia Wang}, {et~al\mbox{.}}} \bibinfo{year}{2025}\natexlab{}.
\newblock \showarticletitle{Harnessing Language Model for Cross-Heterogeneity Graph Knowledge Transfer}. In \bibinfo{booktitle}{\emph{Proceedings of the AAAI Conference on Artificial Intelligence}}.
\newblock


\bibitem[Yang et~al\mbox{.}(2024)]%
        {yangvqgraph}
\bibfield{author}{\bibinfo{person}{Ling Yang}, \bibinfo{person}{Ye Tian}, \bibinfo{person}{Minkai Xu}, {et~al\mbox{.}}} \bibinfo{year}{2024}\natexlab{}.
\newblock \showarticletitle{VQGraph: Rethinking Graph Representation Space for Bridging GNNs and MLPs}. In \bibinfo{booktitle}{\emph{The Twelfth International Conference on Learning Representations}}.
\newblock


\bibitem[Yang et~al\mbox{.}(2016)]%
        {yang2016revisiting}
\bibfield{author}{\bibinfo{person}{Zhilin Yang}, \bibinfo{person}{William Cohen}, {and} \bibinfo{person}{Ruslan Salakhudinov}.} \bibinfo{year}{2016}\natexlab{}.
\newblock \showarticletitle{Revisiting semi-supervised learning with graph embeddings}. In \bibinfo{booktitle}{\emph{International conference on machine learning}}.
\newblock


\bibitem[Ye et~al\mbox{.}(2024)]%
        {ye2024language}
\bibfield{author}{\bibinfo{person}{Ruosong Ye}, \bibinfo{person}{Caiqi Zhang}, \bibinfo{person}{Runhui Wang}, {et~al\mbox{.}}} \bibinfo{year}{2024}\natexlab{}.
\newblock \showarticletitle{Language is All a Graph Needs}. In \bibinfo{booktitle}{\emph{EACL (Findings)}}.
\newblock


\bibitem[You et~al\mbox{.}(2020)]%
        {you2020graph}
\bibfield{author}{\bibinfo{person}{Yuning You}, \bibinfo{person}{Tianlong Chen}, \bibinfo{person}{Yongduo Sui}, {et~al\mbox{.}}} \bibinfo{year}{2020}\natexlab{}.
\newblock \showarticletitle{Graph contrastive learning with augmentations}.
\newblock \bibinfo{journal}{\emph{Advances in neural information processing systems}} (\bibinfo{year}{2020}).
\newblock


\bibitem[Yuan et~al\mbox{.}(2025)]%
        {yuan2025survey}
\bibfield{author}{\bibinfo{person}{Chaohao Yuan}, \bibinfo{person}{Kangfei Zhao}, \bibinfo{person}{Ercan~Engin Kuruoglu}, {et~al\mbox{.}}} \bibinfo{year}{2025}\natexlab{}.
\newblock \showarticletitle{A survey of graph transformers: Architectures, theories and applications}.
\newblock \bibinfo{journal}{\emph{arXiv preprint arXiv:2502.16533}} (\bibinfo{year}{2025}).
\newblock


\bibitem[Zhang et~al\mbox{.}(2024a)]%
        {zhang2024graphtranslator}
\bibfield{author}{\bibinfo{person}{Mengmei Zhang}, \bibinfo{person}{Mingwei Sun}, \bibinfo{person}{Peng Wang}, {et~al\mbox{.}}} \bibinfo{year}{2024}\natexlab{a}.
\newblock \showarticletitle{Graphtranslator: Aligning graph model to large language model for open-ended tasks}. In \bibinfo{booktitle}{\emph{Proceedings of the ACM Web Conference 2024}}. \bibinfo{pages}{1003--1014}.
\newblock


\bibitem[Zhang et~al\mbox{.}(2025a)]%
        {zhang2025can}
\bibfield{author}{\bibinfo{person}{Zhongjian Zhang}, \bibinfo{person}{Xiao Wang}, {et~al\mbox{.}}} \bibinfo{year}{2025}\natexlab{a}.
\newblock \showarticletitle{Can large language models improve the adversarial robustness of graph neural networks?}. In \bibinfo{booktitle}{\emph{Proceedings of the 31st ACM SIGKDD Conference on Knowledge Discovery and Data Mining V. 1}}.
\newblock


\bibitem[Zhang et~al\mbox{.}(2024b)]%
        {zhang2024endowing}
\bibfield{author}{\bibinfo{person}{Zhongjian Zhang}, \bibinfo{person}{Mengmei Zhang}, {et~al\mbox{.}}} \bibinfo{year}{2024}\natexlab{b}.
\newblock \showarticletitle{Endowing pre-trained graph models with provable fairness}. In \bibinfo{booktitle}{\emph{Proceedings of the ACM Web Conference 2024}}.
\newblock


\bibitem[Zhang et~al\mbox{.}(2025b)]%
        {zhang2025rethinking}
\bibfield{author}{\bibinfo{person}{Zhongjian Zhang}, \bibinfo{person}{Mengmei Zhang}, \bibinfo{person}{Xiao Wang}, {et~al\mbox{.}}} \bibinfo{year}{2025}\natexlab{b}.
\newblock \showarticletitle{Rethinking Byzantine Robustness in Federated Recommendation from Sparse Aggregation Perspective}. In \bibinfo{booktitle}{\emph{Proceedings of the AAAI Conference on Artificial Intelligence}}.
\newblock


\bibitem[Zhao et~al\mbox{.}(2024)]%
        {zhaographtext}
\bibfield{author}{\bibinfo{person}{Jianan Zhao}, \bibinfo{person}{Le Zhuo}, \bibinfo{person}{Yikang Shen}, {et~al\mbox{.}}} \bibinfo{year}{2024}\natexlab{}.
\newblock \showarticletitle{GraphText: Graph Reasoning in Text Space}. In \bibinfo{booktitle}{\emph{Adaptive Foundation Models: Evolving AI for Personalized and Efficient Learning}}.
\newblock


\bibitem[Zhou et~al\mbox{.}(2024)]%
        {zhou2024unifying}
\bibfield{author}{\bibinfo{person}{Cai Zhou}, \bibinfo{person}{Xiyuan Wang}, {and} \bibinfo{person}{Muhan Zhang}.} \bibinfo{year}{2024}\natexlab{}.
\newblock \showarticletitle{Unifying generation and prediction on graphs with latent graph diffusion}.
\newblock \bibinfo{journal}{\emph{Advances in Neural Information Processing Systems}} (\bibinfo{year}{2024}).
\newblock


\bibitem[Zhou et~al\mbox{.}(2025)]%
        {zhou-etal-2025-graph}
\bibfield{author}{\bibinfo{person}{Huachi Zhou}, \bibinfo{person}{Jiahe Du}, {et~al\mbox{.}}} \bibinfo{year}{2025}\natexlab{}.
\newblock \showarticletitle{Each graph is a new language: Graph Learning with {LLM}s}. In \bibinfo{booktitle}{\emph{Findings of the Association for Computational Linguistics: ACL 2025}}. \bibinfo{publisher}{Association for Computational Linguistics}.
\newblock


\bibitem[Zhu et~al\mbox{.}(2020)]%
        {zhu2020deep}
\bibfield{author}{\bibinfo{person}{Yanqiao Zhu}, \bibinfo{person}{Yichen Xu}, {et~al\mbox{.}}} \bibinfo{year}{2020}\natexlab{}.
\newblock \showarticletitle{{Deep Graph Contrastive Representation Learning}}. In \bibinfo{booktitle}{\emph{ICML Workshop on Graph Representation Learning and Beyond}}.
\newblock


\end{thebibliography}
\clearpage
\appendix
\section{Related Work}
\subsection{Learning on Graphs}
Graph data are ubiquitous in the real world, including social networks~\cite{huang2024can, zhang2024endowing}, webpage networks~\cite{wu2022semi}, citation networks~\cite{yang2016revisiting, yan2025federated}, and e-commerce networks~\cite{yan2024federated, zhang2025rethinking, liu2025spottrip, chen2025sustainability}. However, due to their inherent irregularity and complex connectivity patterns, effectively modeling graph structure is a fundamental challenge.
Graph Neural Networks (GNNs) have emerged as powerful tools to address this issue by leveraging the message-passing mechanism to aggregate information from local neighborhoods~\cite{wu2020comprehensive,guo2024data}. Representative models such as GCN~\cite{DBLP:conf/iclr/KipfW17}, GAT~\cite{velivckovic2018graph} and GraphSAGE~\cite{hamilton2017inductive} have shown notable success in node classification and link prediction tasks. 
Concurrently, there is a growing interest in adapting transformer-based architectures~\cite{vaswani2017attention} to graph domains~\cite{yuan2025survey,wu2023difformer,wu2022nodeformer, luo2024fairgt, luo2025fairgp}. These models aim to capture global dependencies and richer structural semantics, providing a complementary perspective to traditional local aggregation schemes. 
Despite these successes, a critical limitation of existing GNNs lies in their poor task generalization. Most existing models are trained in a task-specific and dataset-dependent manner, which hampers their performance when transferred to different distributions or tasks. 

\subsection{Large Language Models for Graphs}
Recently, the remarkable success of large language models (LLMs)~\cite{touvron2023llama,achiam2023gpt,chiang2023vicuna} has motivated researchers to adapt them to the graph learning domain, with the ultimate goal of developing a graph foundation model that generalizes across diverse scenarios~\cite{liu2025graph, yang2025harnessing,guograph,yan2025data}. Typically, LLMs are employed as predictors to accomplish various graph tasks, attracting considerable attention~\cite{jin2024large}. A key challenge lies in aligning graph data with the natural language embedding space, so that LLMs can better comprehend graphs. To address this challenge, existing efforts mainly fall into Graph-Textualizing LLMs (GTextLLMs) and Graph-Tokenizing LLMs (GTokenLLMs).

GTextLLMs~\cite{chen2024exploring,fatemitalk,guo2023gpt4graph,wang2023can,zhaographtext,wang2024instructgraph} utilize natural language to describe graphs via manually crafted prompt templates, directly aligning them with text. For instance, NLGraph~\cite{wang2023can} represents graphs as node and edge lists embedded in textual templates. GraphText~\cite{zhaographtext} converts graphs into tree-based textual formats using syntactic graph parsing. 
However, such strategies often suffer from redundancy and imprecise characterization~\cite{chen2024exploring,chen2024llaga,tang2024graphgpt}.
In contrast, GTokenLLMs~\cite{chen2024llaga,tang2024graphgpt,zhang2024graphtranslator,wang2024llms,konggofa} encode complex structures and lengthy texts into a graph token sequence, and treat them as prefix tokens for querying LLMs. To align graph tokens with the text embedding space, they typically construct various language instructions related to graph tasks, and learn a projector to facilitate modalities alignment.
For example, 
GraphGPT~\cite{tang2024graphgpt} builds language instructions around the graph matching task, while LLaGA~\cite{chen2024llaga} builds them by node classification, link prediction, and node description tasks.
Despite these successes, their alignment process follows an LLM-centric design that relies solely on text supervision, resulting in only implicit graph-text alignment and thus limiting performance.

\section{Proofs}
\subsection{Proof of Proposition~\ref{prop:raw_graph_recon}}\label{proof:raw_graph_recon}
\begin{proof}
Since $\mathbf{H}$ is a deterministic function of $\boldsymbol{s}^{G}$ (i.e., mean pooling), the data processing inequality ensures that $\mathcal{I}(\mathcal{G};\boldsymbol{s}^{G}) \ge \mathcal{I}(\mathcal{G};\mathbf{H})$.
By decoding $\mathbf{H}$ back into node features $\mathbf{Z}$ and adjacency matrix $\mathbf{A}$, where $\mathbf{A}_{ij}=1$ if nodes $v_i$ and $v_j$ are connected, otherwise $\mathbf{A}_{ij}=0$, the reconstruction loss can be expressed as:
\begin{align}
\mathcal{L}_{\text{graph}}
=\lambda_f\,\mathbb{E}\big[-\log q_{f}(\mathbf{Z}\!\mid\! \mathbf{H})\big]
+\lambda_s\,\mathbb{E}\big[-\log q_s(\mathbf{A}\!\mid\! \mathbf{H})\big].
\end{align}
Assume $q_f(\mathbf{Z}\!\mid\! \mathbf{H})=\mathcal{N}(\mu_f(\mathbf{H}),\sigma^2\mathbf{I})$, which corresponds to MSE loss over features; and $q_s(\mathbf{A}\!\mid\! \mathbf{H})=\prod_{i<j}\mathrm{Bernoulli}(\pi_s(\mathbf{h}_i,\mathbf{h}_j))$, which corresponds to BCE loss over edges. By the Barber–Agakov identity, for any conditional model $q$ we have:
\begin{align}
\mathbb{E}[-\log q_f(\mathbf{Z}\!\mid\! \mathbf{H})]
&= \mathcal{H}(\mathbf{Z}\!\mid\! \mathbf{H})
+\mathbb{E}[\mathrm{KL}(p(\mathbf{Z}\!\mid\! \cdot)\,\|\,q_f(\mathbf{Z}\!\mid\! \cdot))]
\nonumber \\
&\ge \mathcal{H}(\mathbf{Z}\!\mid\! \mathbf{H}), \\
\mathbb{E}[-\log q_s(\mathbf{A}\!\mid\! \mathbf{H})]
&= \mathcal{H}(\mathbf{A}\!\mid\! \mathbf{H})
+\mathbb{E}[\mathrm{KL}(p(\mathbf{A}\!\mid\! \cdot)\,\|\,q_s(\mathbf{A}\!\mid\! \cdot))]
\nonumber \\
&\ge \mathcal{H}(\mathbf{A}\!\mid\! \mathbf{H}).
\end{align}
Thus,
\begin{align}
\mathcal{L}_{\text{graph}}
\ \ge\ \lambda_f \mathcal{H}(\mathbf{Z}\!\mid\! \mathbf{H})
+ \lambda_s \mathcal{H}(\mathbf{A}\!\mid\! \mathbf{H}).
\end{align}
Here, we assume the graph as a pair of independent random variables $(\mathbf{Z}, \mathbf{A})$. The mutual information
can be decomposed as:
\begin{align}
\mathcal{I}(\mathcal{G};\mathbf{H})
&=\mathcal{I}(\mathbf{Z};\mathbf{H})
+\mathcal{I}(\mathbf{A};\mathbf{H}) \nonumber \\
&= \mathcal{H}(\mathbf{Z})-\mathcal{H}(\mathbf{Z}\!\mid\! \mathbf{H})
+ \mathcal{H}(\mathbf{A})-\mathcal{H}(\mathbf{A}\!\mid\! \mathbf{H}).
\end{align}
Since $\mathcal{H}(\mathbf{Z})$ and $\mathcal{H}(\mathbf{A})$ are parameter-independent, combining the above inequality and dropping non-negative KL terms yields:
\begin{align}
\mathcal{I}(\mathcal{G};\boldsymbol{s}^{G})
\ \ge\ \mathcal{H}(\mathbf{Z})+\mathcal{H}(\mathbf{A})
\ -\ \tfrac{1}{\lambda_f}\mathcal{L}_{\text{feat}}
\ -\ \tfrac{1}{\lambda_s}\mathcal{L}_{\text{topo}}.
\end{align}
\end{proof}
The proof indicates that minimizing the raw graph reconstruction loss maximizes a computable lower bound of $\mathcal{I}(\mathcal{G};\boldsymbol{s}^{G})$, which directly improves the upper bound of optimization objective C-GTMI $\mathcal{I}(\boldsymbol{x}_t;\boldsymbol{s}^{G}\!\mid\! \boldsymbol{s}^{T}_{<t})$.

\subsection{Proof of Proposition~\ref{prop:latent_sim}}\label{proof:latent_sim}

\begin{proof}
Given a subgraph $\mathcal{G}$, let $\mathbf{E}=g(\mathcal{G})$ denote the latent node representation matrix extracted by the pre-trained GNN $g$. The reconstruction tokens $\mathbf{H}$ are mapped to $\mathbf{E}$ by a lightweight MLP $s(\cdot)$, with normalization ensuring $\|\mathbf{E}\|=\|s(\mathbf{H})\|=1$. The reconstruction loss is defined as the cosine similarity:
\begin{equation}
\mathcal L_{\text{sim}}=\mathbb E[1-\cos \left(s(\mathbf{H}),\mathbf{E}\right)].
\end{equation}
On the unit sphere, we assume that $\mathbf{E}$ conditioned on $\mathbf{H}$ follows a von Mises–Fisher (vMF) distribution:
\begin{equation}
q_s(\mathbf{E}\!\mid\! \mathbf{H})=\text{vMF}(\mathbf{E};\ \mu(\mathbf{H}),\kappa)
\propto \exp\big(\kappa\cdot\lambda_{l}\cdot \cos(\mu(\mathbf{H}),\mathbf{E})\big),
\end{equation}
where $\mu(\mathbf{H}):=\frac{s(\mathbf{H})}{\|s(\mathbf{H})\|}$, $\kappa>0$, $\lambda_{l}$ is a scaling factor. It follows that:
\begin{equation}
-\log q_s(\mathbf{E}\!\mid\! \mathbf{H})= -\kappa\cdot\lambda_{l}\cdot\cos \left(s(\mathbf{H}),\mathbf{E}\right) + \log C_d(\kappa),
\end{equation}
where $\log C_d(\kappa)$ is a constant independent of $\mathbf{H}$. Therefore,
\begin{equation}
\begin{aligned}
\mathbb E[-\log q_s(\mathbf{E}\!\mid\! \mathbf{H})]&=\text{const}+\kappa\cdot\lambda_{l}\cdot\mathbb E[1-\cos\left(s(\mathbf{H}),\mathbf{E}\right)]\\
&=\text{const}+\kappa\cdot\lambda_{l}\cdot\mathcal{L}_{\text{sim}}.
\end{aligned}
\end{equation}
For any conditional model $q_s(\mathbf{E}\!\mid\! \mathbf{H})$, we have:
\begin{equation}
\begin{aligned}
\mathbb E[-\log q_s(\mathbf{E}\!\mid\! \mathbf{H})]
&=\mathcal{H}(\mathbf{E}\!\mid\! \mathbf{H})+\mathbb E[\mathrm{KL}(p(\mathbf{E}\!\mid\! \cdot)\,\|\,q_s(\mathbf{E}\!\mid\! \cdot))] \\
&\ge \mathcal{H}(\mathbf{E}\!\mid\! \mathbf{H}).
\end{aligned}
\end{equation}
Thus, we obtain:
\begin{equation}
\mathcal L_{\text{sim}}\ge \frac{1}{\kappa\cdot\lambda_{l}}\mathcal{H}(\mathbf{E}\!\mid\! \mathbf{H})- \text{const}\Rightarrow 
\mathcal{H}(\mathbf{E}\!\mid\! \mathbf{H})\le \kappa\cdot\lambda_{l}\cdot\mathcal L_{\text{sim}}+\text{const}.
\end{equation}
Since $\mathcal{I}(\mathbf{E};\mathbf{H})=\mathcal{H}(\mathbf{E})-\mathcal{H}(\mathbf{E}\!\mid\! \mathbf{H})$, we obtain:
\begin{equation}
\mathcal{I}(\mathbf{E};\mathbf{H})\ \ge\ \mathcal{H}(\mathbf{E})-(\kappa\cdot\lambda_{l}\cdot\mathcal L_{\text{sim}}+\text{const}).
\end{equation}
Moreover, $g$ is frozen so that $\mathbf{E}=g(\mathcal{G})$ is a deterministic function of $\mathcal{G}$, and $\mathbf{H}$ is a deterministic function of $\boldsymbol{s}^{G}$, thus we have:
\begin{equation}
\mathcal{I}(\mathcal{G};\boldsymbol{s}^{G})
\!\ge\! \mathcal{I}(\mathcal{G};\mathbf{H})
\!\ge\! \mathcal{I}(\mathbf{E};\mathbf{H})
\!\ge\! \mathcal{H}(\mathbf{E})-(\kappa\!\cdot\!\lambda_{l}\!\cdot\!\mathcal L_{\text{sim}}+\text{const}).
\end{equation}
\end{proof}
The proof indicates that minimizing the similarity loss between $\mathbf{H}$ and $\mathbf{E}$ maximizes a computable lower bound of $\mathcal{I}(\mathcal{G};\boldsymbol{s}^{G})$, which in turn improves the attainable upper bound of C-GTMI $\mathcal{I}(\boldsymbol{x}_t;\boldsymbol{s}^{G}\!\mid\! \boldsymbol{s}^{T}_{<t})$.

\subsection{Proof of Proposition~\ref{prop:latent_denoise}}\label{proof:latent_denoise}

\begin{proof}
Given a subgraph $\mathcal{G}$, let $\mathbf{E}=g(\mathcal{G})$ denote the latent node representation matrix extracted by the pre-trained GNN $g$. The reconstruction tokens $\mathbf{H}$ are projected into a condition matrix $\mathbf{C}=\Psi(\mathbf{H})$. 
The objective is to model a conditional distribution $q_f(\mathbf{E}_{t-1}\!\mid\! \mathbf{E}_{t}, \mathbf{C})$, such that the clean embeddings $\mathbf{E}$ can be recovered from their noisy counterparts $\mathbf{E}_{t}$ under the condition $\mathbf{C}$.
The forward diffusion process is formulated as a Markov chain that gradually perturbs the clean embeddings $\mathbf{E}$ by adding Gaussian noise, thereby constructing noisy embeddings $\mathbf{E}_{t}$:
\begin{equation}
\begin{aligned}
&p(\mathbf{E}_t\!\mid\! \mathbf{E})=\mathcal N\!\big(\sqrt{\bar\alpha_t}\,\mathbf{E},\ (1-\bar\alpha_t)\,\mathbf{I}\big),\\
&\mathbf{E}_t=\sqrt{\bar\alpha_t}\mathbf{E}+\sqrt{1-\bar\alpha_t}\,\boldsymbol{\epsilon},\ \quad \boldsymbol{\epsilon}\sim\mathcal N(\mathbf{0},\mathbf{I}),
\end{aligned}
\end{equation}
where $\mathcal{N}(\boldsymbol{\mu},\boldsymbol{\Sigma})$ denotes a Gaussian distribution, $t$ indicates discrete timesteps, $\alpha_t=1-\beta_t$, $\bar{\alpha}_t=\prod_{i=1}^t\alpha_t$, and $\beta_t \in (0,1)$ is a pre-defined variance schedule. A latent diffusion model learns to reverse this progressive noise addition process. Specifically, to iteratively generate clean embeddings $\mathbf{E}$ from pure noise $\mathbf{E}_t$ conditioned on $\mathbf{C}$, the reverse process is parameterized as
\begin{equation}
\begin{aligned}
&q_f(\mathbf{E}_{t-1}\!\mid\! \mathbf{E}_t, \mathbf{C})=\mathcal N\!\left(\boldsymbol{\mu}_f(\mathbf{E}_t,\mathbf{C},t),\ \sigma_t^2 \mathbf{I}\right),\\
&\boldsymbol{\mu}_f=\frac{1}{\sqrt{\alpha_t}}\left(\mathbf{E}_t-\frac{1-\alpha_t}{\sqrt{1-\bar\alpha_t}}\,\boldsymbol{\epsilon}_f(\mathbf{E}_t;\mathbf{C},t)\right).
\end{aligned}
\end{equation}
The marginal conditional likelihood is then given by
\begin{equation}
\begin{aligned}
\log q_f(\mathbf{E}\!\mid\! \mathbf{H})
&= \log \int q_f(\mathbf{E}_T)\,
   \prod_{t=1}^{T} q_f(\mathbf{E}_{t-1}\!\mid\! \mathbf{E}_t,\mathbf{C}) \\
&\quad \times q_f(\mathbf{E}\!\mid\! \mathbf{E}_1,\mathbf{C})\, d \mathbf{E}_{1:T}.
\end{aligned}
\end{equation}
By introducing the forward distribution $p(\mathbf{E}_{1:T}\!\mid\! \mathbf{E})$ as a variational approximation, the standard variational lower bound (ELBO) can be derived:
\begin{equation}
\begin{aligned}
\log q_f(\mathbf{E}\!\mid\! \mathbf{H}) 
&\ge -\mathrm{KL}\left(p(\mathbf{E}_T\!\mid\! \mathbf{E})\,\|\,q_f(\mathbf{E}_T)\right) \\
&-\sum_{t=2}^{T}\mathbb E\left[\mathrm{KL}\big(p(\mathbf{E}_{t-1}\!\mid\! \mathbf{E}_t,\mathbf{E})\,\|\,q_f(\mathbf{E}_{t-1}\!\mid\! \mathbf{E}_t,\mathbf{C})\big)\right] \\
&+\mathbb E\left[\log q_f(\mathbf{E}\!\mid\! \mathbf{E}_1,\mathbf{C})\right].
\end{aligned}
\end{equation}
Under a variance-preserving schedule and Gaussian parameterization, each KL term can be rewritten as a mean-squared error (MSE) on the noise prediction, up to a constant independent of the parameters:
\begin{equation}
\begin{aligned}
\mathrm{KL}\!\big(p(\mathbf{E}_{t-1}\!\!\mid\!\! \mathbf{E}_t,\mathbf{E})\|q_f(\mathbf{E}_{t-1}\!\!\mid\!\!\mathbf{E}_t,\mathbf{C})\big) 
&= \tilde{\lambda}_t\mathbb{E}_{\boldsymbol{\epsilon}\sim\mathcal N(\mathbf 0,\mathbf I)} 
\| \boldsymbol{\epsilon}_f(\mathbf{E}_t;\mathbf{C},t)\!-\!\boldsymbol{\epsilon}\|_2^2 \\
&+ \text{const}_t,
\end{aligned}
\end{equation}
where $\tilde{\lambda}_t>0$ depends on the noise schedule. Substituting back into the ELBO and aggregating constants yields the diffusion loss, which is equivalent to the negative ELBO up to a constant:
\begin{equation}
\mathcal{L}_{\text{diff}}
=\sum_{t=1}^T \lambda_t\;
\mathbb{E}_{(\mathbf{E},\mathbf{H}),\,\boldsymbol{\epsilon}}
\Big\| \boldsymbol{\epsilon}_f\!\big(\sqrt{\bar\alpha_t}\mathbf{E}+\sqrt{1-\bar\alpha_t}\boldsymbol{\epsilon};\ \mathbf{C},t\big) -\boldsymbol{\epsilon} \Big\|_2^2 ,
\end{equation}
where $\lambda_t>0$ absorbs the constants into the weights. This equivalence leads to the upper bound on the negative log-likelihood:
\begin{equation}
\mathbb{E}[-\log q_f(\mathbf{E}\!\mid\! \mathbf{H})] \;\le\; \mathrm{const}+\lambda_{l}\cdot\mathcal{L}_{\text{diff}},
\end{equation}
where $\lambda_{l}$ is a scaling factor. Moreover, for any conditional model $q_f(\mathbf{E}\!\mid\! \mathbf{H})$ we have:
\begin{equation}
\begin{aligned}
\mathbb{E}[-\log q_f(\mathbf{E}\!\mid\! \mathbf{H})]
&= \mathcal{H}(\mathbf{E}\!\mid\! \mathbf{H})
+ \mathbb{E}\!\left[ \mathrm{KL}\!\big(p(\mathbf{E}\!\mid\! \mathbf{H})\ \|\ q_f(\mathbf{E}\!\mid\! \mathbf{H})\big) \right] \\
&\ge \mathcal{H}(\mathbf{E}\!\mid\! \mathbf{H}).
\end{aligned}
\end{equation}
Discarding the non-negative KL term yields the following upper bound on the conditional entropy:
\begin{equation}
\mathcal{H}(\mathbf{E}\!\mid\! \mathbf{H}) \ \le\ \mathrm{const}+\lambda_{l}\cdot\mathcal{L}_{\text{diff}}.
\end{equation}
By the definition $\mathcal{I}(\mathbf{E};\mathbf{H})=\mathcal{H}(\mathbf{E})-\mathcal{H}(\mathbf{E}\!\mid\! \mathbf{H})$, we obtain:
\begin{equation}
\mathcal{I}(\mathbf{E};\mathbf{H}) \ \ge\ \mathcal{H}(\mathbf{E})-(\mathrm{const}+\lambda_{l}\cdot\mathcal{L}_{\text{diff}}).
\end{equation}
Finally, since $\mathbf{E}=g(\mathcal{G})$ is a deterministic function of $\mathcal{G}$ and $\mathbf{H}$ is deterministically obtained from $\boldsymbol{s}^{G}$ by mean pooling, we have:
\begin{equation}
\mathcal{I}(\mathcal{G};\boldsymbol{s}^{G})\ge \mathcal{I}(\mathbf{E};\boldsymbol{s}^{G}) \ge \mathcal{I}(\mathbf{E};\mathbf{H}) \ge \mathcal{H}(\mathbf{E})-(\mathrm{const}+\lambda_{l}\cdot\mathcal{L}_{\text{diff}}).
\end{equation}
\end{proof}
The proof indicates that minimizing the diffusion noise prediction loss $\mathcal{L}_{\text{diff}}$ maximizes a computable lower bound of $\mathcal{I}(\mathcal{G};\boldsymbol{s}^{G})$, which in turn improves the attainable upper bound of C-GTMI $\mathcal{I}(\boldsymbol{x}_t;\boldsymbol{s}^{G}\!\mid\! \boldsymbol{s}^{T}_{<t})$.

\section{Experiment Details}
\subsection{Dataset Descriptions}\label{app: exp-dataset}
We conduct experiments on four widely-used datasets: Cora~\cite{yang2016revisiting}, Pubmed~\cite{yang2016revisiting} and OGBN-Arxiv~\cite{hu2020open} from citation networks, as well as Reddit~\cite{huang2024can} from a social network. The dataset statistics are in Table~\ref{tab:dataset_stats}, and the detailed descriptions are listed as follows:\\
$\bullet$ \textbf{Cora} is a citation network in which each node denotes a research paper in the field of computer science, and the associated raw text features include the title and abstract of the paper. Edges represent citation relationships between papers. Each node is assigned a label indicating the paper's category.\\
$\bullet$ \textbf{Pubmed} is a citation network consisting of research papers in the biomedical domain, where each node corresponds to a paper and each edge denotes a citation relationship between two papers.\\
$\bullet$ \textbf{OGBN-Arxiv} is a citation network comprising research papers and their citation relationships, collected from the arXiv repository. Each node represents a paper, and each edge indicates a citation link between two papers.\\
$\bullet$ \textbf{Reddit} is a social network in which each node represents a user, and edges indicate whether two users have interacted by replying to each other. The raw text associated with each node comprises the content from the user’s previously published subreddit posts, limited to the most recent three entries. Each node is labeled to indicate whether the user is categorized as popular or normal.

\begin{table}[t]
  \centering
  \caption{Statistics of experiment datasets.}
  \vskip -0.15in
  \label{tab:dataset_stats}
  \resizebox{\linewidth}{!}{
  \begin{tabular}{cccccc}
    \toprule
    Dataset & \# Nodes & \# Edges & \# Class & Splitting & Domain \\
    \midrule
    Cora       & 2,708   & 5,429    & 7  & 6:2:2 & citation \\
    Pubmed     & 19,717  & 44,338   & 3  & 6:2:2 & citation \\
    OGBN-Arxiv      & 169,343 & 1,166,243 & 40 & 6:2:3 & citation \\
    Reddit  & 33,434  & 198,448  & 2  & 1:1:8 & social network \\
    \bottomrule
  \end{tabular}
  }
\vskip -0.15in
\end{table}

\subsection{Baseline Descriptions}\label{sec:exp-baseline}
The traditional GNN models include:
\begin{itemize}[leftmargin=*]
    \item \textbf{GCN}~\cite{DBLP:conf/iclr/KipfW17} is a popular graph convolutional network based on spectral theory.
    \item \textbf{GAT}~\cite{velivckovic2018graph} is composed of multiple attention layers, which can learn different weights for different neighborhood nodes.
    \item \textbf{GraphSAGE}~\cite{hamilton2017inductive} inductively learns node embeddings by utilizing ego-node features along with a fixed number of neighboring node features.
\end{itemize}
The transformer-based graph models include:
\begin{itemize}[leftmargin=*]
    \item \textbf{UniMP}~\cite{shi2021masked} employs a Graph Transformer network to integrate both feature and label propagation during training and inference.
    \item \textbf{NodeFormer}~\cite{wu2022nodeformer} filters out dissimilar neighbors using a fixed threshold and aggregates multi-view neighborhood information to address graph inconsistency.
    \item \textbf{DIFFormer}~\cite{wu2023difformer} introduces an energy-constrained, diffusion-based graph transformer designed to encode a batch of instances into evolutionary states by incorporating contextual information from other instances.
\end{itemize}
The pretrain-finetune GNNs include:
\begin{itemize}[leftmargin=*]
    \item \textbf{GraphCL}~\cite{you2020graph} employs diverse graph augmentation techniques to generate multiple views of a graph and learns node representations by maximizing feature consistency across these views.
    \item \textbf{GRACE}~\cite{zhu2020deep} proposes to augment the graph structure by both random edge dropout and random node feature dropping, and uses the corrupted graphs as the contrastive views.
    \item \textbf{LGD}~\cite{zhou2024unifying} LGD pretrains an autoencoder to embed graphs into a powerful latent space and then trains a diffusion model in this space to unify graph generation and prediction tasks.
\end{itemize}
The GTextLLMs include:
\begin{itemize}[leftmargin=*]
    \item \textbf{GraphText-ICL}~\cite{zhaographtext} is the in-context learning variant of GraphText, which eliminates the tuning process. GraphText decouples depth and scope by encapsulating node attributes and relationships in the graph syntax tree and processing it with an LLM. 
\end{itemize}
The GTokenLLMs include:
\begin{itemize}[leftmargin=*]
    \item \textbf{GraphText-SFT}~\cite{zhaographtext} is a supervised fine-tuning variant of GraphText, which employs an MLP projector to map node continuous features into the textual input space for querying LLMs. 
    \item \textbf{GraphGPT}~\cite{tang2024graphgpt} encodes graph data using a pre-trained GNN and introduces a dual-stage instruction tuning to align graph representations with natural language. 
    \item \textbf{LLaGA}~\cite{chen2024llaga} designs two graph description templates to serialize graph data and instruction-tuning LLMs with three different tasks, including node description, node classification, and link prediction tasks.
\end{itemize}

\subsection{Implementation Details}\label{sec:exp-implementation}
First, RGLM aims to explicitly incorporate graph supervision into the tuning process, rather than designing a new instruction dataset. Therefore, we utilize the instruction datasets from LLaGA~\cite{chen2024llaga} to train the model.
In the tuning stage, we train our model for one epoch with the AdamW optimizer, set the batch size to 4 per device, and set the projector learning rate to 2e-3. Following LLaGA~\cite{chen2024llaga}, to compensate for the limited data size, we replicate the training samples from the smallest dataset, Cora and Reddit, three times.
The warmup ratio is set to 3e-2, and the maximum input length of LLMs is 4096. For the Neighbor Detail Template, we sample two-hop neighbors around each node, setting the sample size to 10 for each hop. We adopt LoRA~\cite{hu2022lora} for efficient fine-tuning, with the learning rate, dropout rate, rank, and $\alpha$ set to 5e-4, 0.1, 8, and 32, respectively. 
In the inference stage, we set the temperature of the LLM to 0.001 to ensure reproducibility of the results.
We uniformly set the learning rate to 5e-4 for all three RGLM variants and configure the hyperparameters as follows: For RGLM-Decoder, the hyperparameter $\lambda_f$ is selected from $\{0.1, 0.2, 0.4, 0.6, 0.8, 1.0\}$, and $\lambda_s$ is selected from $\{1, 2, 4, 6, 8, 10\}$. For RGLM-Similarizer and RGLM-Denoiser, $\lambda_l$ is chosen from 0.2 to 2.0 with a step size of 0.2. 

For the LM-based pre-trained GNN encoder, following~\cite{zhou2024unifying,rampavsek2022recipe}, we use the combination of DeeperGCN~\cite{li2020deepergcn} and GAT~\cite{velivckovic2018graph} as the backbone, with 7 layers and 96 hidden dimensions. We use LapPE and RWSE. We train the model for 500 epochs with a masked label ratio of 0.8, and a cosine learning rate with 50 warm-up epochs and an initial value of 1e-3.

To facilitate fair comparisons, we adopt a unified text encoder, Sentence-BERT~\cite{reimers2019sentence}, to extract node features from the raw textual attributes. 
All GTokenLLMs use Vicuna-7B-v1.5-16K~\cite{chiang2023vicuna} as the base LLM. 
For these baseline models, we implement them with the source code provided by the authors.
Unless otherwise specified, we adopt the default parameter setting in the author’s implementation.
For datasets not included in the released baseline code, we tune all baselines using a grid search strategy.
For all experiments, we select the optimal hyperparameters on the validation set and apply them to the test set. The implementation of all models uses PyG~\cite{fey2019fast}.
\subsection{Computing Environment and Resources}\label{exp: resource_cost}
The experiments are conducted in a computing environment with the following specifications: 
\begin{itemize}[leftmargin=*]
    \item \textbf{OS}: Linux ubuntu 5.15.0-102-generic. 
    \item \textbf{CPU}: Intel(R) Xeon(R) Platinum 8358 CPU @ 2.60GHz. 
    \item \textbf{GPU}: NVIDIA A800 80GB.
\end{itemize}



\end{document}